\newif\ifarxiv
\arxivtrue
\documentclass[11pt,letterpaper]{article}
\usepackage[margin=1in]{geometry}
\usepackage{xcolor}

\usepackage{times}  
\usepackage{helvet}  
\usepackage{courier}  
\usepackage[hyphens]{url}  
\usepackage{graphicx} 
\usepackage[numbers,sort&compress]{natbib}
\usepackage{caption} 
\usepackage{placeins}
\usepackage{amsmath}
\usepackage{amsthm}
\usepackage{thmtools}
\usepackage{amssymb}
\usepackage{amsfonts}
\newtheorem{lemma}{Lemma}

\usepackage{algorithm}
\usepackage{algorithmic}
\usepackage[hidelinks]{hyperref}
\usepackage{cleveref}
\newcommand{\norm}[1]{\left\lVert#1\right\rVert}
\newcommand{\E}{\mathbb{E}}
\newcommand{\floor}[1]{\left\lfloor#1\right\rfloor}
\newcommand{\ceil}[1]{\left\lceil#1\right\rceil}
\newcommand{\parentheses}[1]{\left(#1\right)}
\newcommand{\angles}[1]{\left\langle#1\right\rangle}

\newcommand{\set}[1]{\left\{#1\right\}}

\newcommand{\Var}[1]{\mathrm{Var}{#1}} 

\newif\ifcomm
\commfalse
\ifcomm
\newcommand{\SV}[1]{\textcolor{blue}{SV:~#1}}
\newcommand{\ran}[1]{\textcolor{red}{Ran:~#1}} 
\newcommand{\YBI}[1]{\textcolor{green}{YBI:~#1}} 
\newcommand{\MM}[1]{\textcolor{purple}{MM:~#1}} 
\newcommand{\AP}[1]{\textcolor{violet}{AP:~#1}} 
\else
\newcommand{\SV}[1]{}
\newcommand{\ran}[1]{}
\newcommand{\YBI}[1]{}
\newcommand{\MM}[1]{}
\newcommand{\AP}[1]{}
\fi
\title{Entropy-Constrained Adaptive Stochastic Quantization}
\date{}
\author{\makebox[\textwidth][c]{\small Ran Ben Basat\textsuperscript{1,2}\quad Yaniv Ben-Itzhak\textsuperscript{2}\quad Michael Mitzenmacher\textsuperscript{3}\quad Shay Vargaftik\textsuperscript{2}}\\[0.5em]
\small \textsuperscript{1}University College London\quad
\textsuperscript{2}VMware Research by Broadcom\quad
\textsuperscript{3}Harvard University}

\newcommand{\vnmse}{\mathrm{vNMSE}}

\newcommand{\appendixlocation}[1]{\ifarxiv Appendix~\ref{#1}\else the supplementary material, Appendix~\ref{#1}\fi}

\begin{document}

\maketitle

\begin{abstract}
Adaptive stochastic quantization (ASQ) is a recently introduced quantization approach that optimizes the Mean Squared Error (MSE) for a given input while preserving unbiasedness. It is designed to alleviate the communication and memory bottlenecks of modern data and machine learning workloads, including model, gradient, and KV-cache compression and nearest-neighbor search. 
Further, practical systems can then compress quantized data with a lossless entropy encoder. 
However, existing unbiased methods, including ASQ, choose their quantization values without considering this later encoding stage, leaving accuracy \mbox{on the table.}

We formulate the Entropy Constrained Adaptive Stochastic Quantization (ECASQ) problem, which jointly selects adaptive quantization values to minimize MSE under an entropy budget and an unbiasedness constraint. We give an optimal dynamic program with $O(sd^2)$ time and $O(d^2)$ space for a length-$d$ vector and at most $s$ quantization values, and a GPU-friendly approximate dynamic program with $O(sd^2)$ time and $O(d)$ space. The approximation guarantees that the solution has an MSE no larger than the optimal solution that uses one fewer bit of entropy per entry. We also provide an iterative refinement procedure for the approximation solution that, in our experiments, yields near-optimal results while retaining a substantial speed advantage over our solver for \mbox{the optimal solution.}
\end{abstract}
\section{Introduction}
\vspace*{1mm}
\label{sec:introduction}

The training and deployment of machine learning (ML) models are often constrained by memory capacity, computational resources, and network bandwidth. To alleviate these constraints, quantization has become a cornerstone technique. By mapping high-precision values to low-width representations, quantization facilitates effective compression strategies across various ML workloads, including gradient compression~\cite{ramezani2021nuqsgd,konevcny2018randomized,vargaftik2022eden,han2024beyond}\ifarxiv\ and communication-efficient distributed training~\cite{li2024thc,chen2024ml,han2026dynamiq,zhao2026mlforml}\fi, post-training quantization~\cite{frantar2023optq,jeon2023frustratingly}, and \mbox{KV-cache footprint reduction} \cite{sheng2023flexgen,liu2024minicache}\ifarxiv, including disaggregated inference~\cite{zhang2025hack}\fi.

Three properties are especially desirable in this setting: (1) unbiasedness, (2) adaptivity to the input, and (3) optimization for the size of the entropy-encoded output. As we describe next, existing methods do not provide \mbox{all three simultaneously.}

Let $X=\angles{x_1,\ldots,x_d}\in\mathbb R^d$ be the input, and let $Q=\set{q_1,\ldots,q_m}\subset\mathbb R$, with $q_1<\cdots<q_m$, be an ordered \emph{codebook} of $m$ quantization values. To enable unbiasedness, a valid $Q$ must span $X$: $q_1\le\min_i x_i$ and $q_m\ge\max_i x_i$. Stochastic Quantization (SQ) rounds each $x_i$ between its nearest values in $Q$, producing $\widehat X=\angles{\widehat x_1,\ldots,\widehat x_d}$ with $\E[\widehat x_i]=x_i$. \mbox{Its distortion is} $MSE(Q,X)=\E\big[\lVert X-\widehat X\rVert^2\big]$.

\textbf{Coordinatewise \textbf{unbiasedness}}, $\E[\widehat X]=X$, is desired for different applications, including for fast convergence under gradient compression~\cite{NIPS2017_6c340f25,castroquartet}. In Distributed Mean Estimation~\cite{pmlr-v70-suresh17a}, it also makes errors cancel, allowing the MSE to decrease inversely with the \mbox{number of participants}~\cite{vargaftik2021drive, basat2024quick}.\footnote{We note that there are other methods for achieving unbiased quantization, such as \ifarxiv shared-randomness protocols~\cite{ben2021send}, \fi subtractive dithering~\cite{rapp2019estimation} and Adaptive Unbiased Quantization\ifarxiv~\cite{basat2025better}\fi. None of these methods are optimized \mbox{for entropy encoding.}}

\textbf{Adaptive methods} choose $Q$ for the particular input $X$ and improve substantially over predetermined codebooks~\cite{zhang2017zipml, faghri2020adaptive, ben2024optimal}. Adaptive Stochastic Quantization (ASQ) minimizes $MSE(Q,X)$ over $|Q|\le s$ given a bound $s$~\cite{zhang2017zipml}; it is therefore optimal for fixed-length value identifiers, or when no entropy encoding follows. ASQ admits an \mbox{optimal $O(d\cdot s)$-time algorithm} \cite{ben2024optimal}.

\textbf{Lossless entropy encoding}, which is often follows quantization by practical systems, assigns shorter bit strings to more frequent values. For $q\in Q$, let $f_q=d^{-1}\sum_{i=1}^d\Pr[\widehat x_i=q]$. We define the average-entry entropy (in bits) as
$H(\widehat X)=-\sum_{q\in Q}f_q\log_2f_q$, with $0\log_2 0=0$. Equivalently, this is the entropy of $\widehat x_I$ for a uniformly random coordinate $I$.

Entropy-aware adaptive methods include Entropy-Constrained Scalar and Vector Quantization~\cite{gish1968asymptotically,chou1989entropy}, broader vector-quantization frameworks~\cite{gersho1992vector}, and neural compression~\cite{balle2017end}. These methods do \mbox{not enforce unbiasedness.}

A separate line of research in distributed optimization includes quantization schemes in which unbiased quantization is followed by entropy encoding. Notable examples include QSGD~\cite{NIPS2017_6c340f25} and Distributed Mean Estimation with limited communication~\cite{pmlr-v70-suresh17a}, but they are inherently \textit{non-adaptive}, i.e., they rely on predetermined quantization grids (e.g., values scaled globally by a vector norm), which are generally less accurate and can result in excessive codebook sizes.

\ifarxiv
Randomized-transform approaches such as DRIVE and EDEN likewise apply scalar
quantizers whose levels are prescribed up to scaling, rather than solving the
input-specific ASQ codebook problem. Recent work further clarifies this line's
relationship to TurboQuant and analyzes when fast randomized Hadamard
transforms can replace uniform random rotations~\cite{vargaftik2021drive,vargaftik2022eden,benbasat2026turboquant,benbasat2026hadamard}.
\fi

This motivates the \emph{Entropy Constrained Adaptive Stochastic Quantization} (ECASQ) problem. In addition to $X$, its inputs are a \textit{permissible} alphabet $P\subset\mathbb R$ of size $p$ (assumed to contain an encompassing codebook), a maximum codebook size $s\le p$, and an average-entry entropy bit-budget $b\ge0$. Specifically, ECASQ solves
\begin{equation*}
\begin{aligned}
&\min_{Q \subseteq P} \qquad\qquad MSE(Q,X) \\
&\text{subject to}  \  \qquad |Q| \leq s, \,\, H(\widehat X) \leq b,\,\, \E[\widehat X]=X.
\end{aligned} \quad.
\end{equation*}
In addition to the entropy constraint, constraining the codebook size $s$ is important to limit its representation overhead when compressing vectors of limited size. Also, working with smaller codebooks results in a faster ECASQ solution as well as quicker quantizing and dequantizing.

\textbf{Solution Overview.}
The first step towards solving ECASQ is to introduce a Lagrange multiplier (as commonly done by previous entropy-encoding-aware works, e.g.,~\cite{gish1968asymptotically, chou1989entropy}), which allows us to consider, for some $\lambda \ge 0$, the relaxed problem, which we name $\lambda$-ECASQ.
\begin{equation*}
\begin{aligned}
&\min_{Q \subseteq P} \qquad\qquad MSE(Q,X)+ \lambda\cdot H(\widehat X) \\
&\text{subject to}  \  \qquad |Q| \leq s, \,\, \E[\widehat X]=X.
\end{aligned}\label{eq:obj} \quad.
\end{equation*}
Intuitively, $\lambda=0$ is the classic ASQ problem, while large $\lambda$ values force the algorithm to prioritize low entropy over reducing error. Accordingly, one 
can vary $\lambda$ to retrieve the convex hull of the Pareto-optimal MSE-Entropy trade-off, under the natural assumption that each entry is stochastically \mbox{quantized to one of the encompassing quantization values.}

We note that, generally, this Lagrangian optimization may not be able to find an optimal solution to the ECASQ problem for a given value $b$. Specifically, a respective $\lambda$ value that results in the exact entropy constraint may not exist (but if some $\lambda$ achieves a value $b$, that solution is optimal for that $b$). We expand on this issue in~\Cref{sec:discussion} and show that if `time-sharing' (running two quantizers obtained from solving $\lambda$-ECASQ with two different $\lambda$ values, on a suitable random partition of the input) is allowed, then we recover the ability to provide the optimal MSE for any entropy constraint $b$. In practice, however, the single Lagrangian solution is generally optimal or very close to optimal and avoids the need for two quantizers.

We first explain why the dynamic program (DP) used by optimal ASQ methods such as ZipML~\cite{zhang2017zipml} and QUIVER~\cite{ben2024optimal} cannot be extended to consider entropy encoding.
Instead, we present a novel DP that allows solving $\lambda$-ECASQ optimally. While a naive implementation of the new DP requires $O(p^3\cdot s + d)$ time and $O(p^2\cdot s + d)$ space, we show how to use existing DP acceleration and space-saving techniques such as SMAWK~\cite{aggarwal1986geometric} and Hirschberg~\cite{hirschberg1975linear} to lower the \mbox{asymptotics to $O(p^2\cdot s + d)$ time and $O(p^2 + d)$ space.}

To further reduce space and enable running on large $p$, we present an approximation algorithm which guarantees that it is at most $1$ bit away from being optimal; that is, with an entropy bound of $b>1$ bits, the approximate solution has an MSE that is not larger than that of an optimal $\lambda$-ecasq solution with an entropy bound of $b-1$ bits and the same $\lambda$ value.
This new approximation algorithm requires $O(p^2\cdot s + d)$ time and $O(p\cdot s + d)$ space, where the latter can be reduced to $O(p + d)$ using Hirschberg's method.
While the asymptotic running time of this algorithm matches that of our optimal DP, it is orders of magnitude faster in practice due to being GPU-friendly and space efficient. 



Finally, we show how the approximation algorithm's accuracy can be further improved by cascading the approximate optimization with a Lloyd-max-style iterative refinement phase. 
Through extensive evaluation over multiple input distributions, we demonstrate that the result is near-optimal after a handful of steps while remaining much faster and space-efficient than the \mbox{optimal DP variant.}

\section{Preliminaries}\label{sec:preliminaries}
Given an input $X\in\mathbb R^d$ and a set of quantization values $Q\subset R$, Stochastic Quantization (SQ) returns a quantized vector $\widehat X = \langle\widehat x\rangle_{x\in X}$ such that each $x\in X$ is unbiasedly quantized between its encompassing values in $Q$. 
In line with previous works (e.g.,~\cite{ben2024optimal,zhang2017zipml,faghri2020adaptive}), we hereafter assume that $X$ is \emph{sorted} (otherwise, we sort a copy of $X$ and keep the original indices to quantize entries in the initial order). We further assume that the entries of $X$ are distinct (otherwise, we can work with its weighted histogram).
That is, denoting by $a_x=\max\set{q\in Q\mid q\le x}$ and $b_x=\min\set{q\in Q\mid q\ge x}$ the relevant entries in $Q$, we have that $(\widehat x=a_x)$ if $(a_x=b_x)$ or $\widehat x = \begin{cases}
    b_x & \mbox{w.p. $(x-a_x)/(b_x-a_x)$} \\ 
    a_x & \mbox{otherwise}
\end{cases}$ if $(a_x\neq b_x)$.
Since $\Var[\widehat x] = \Pr[\widehat x = b_x]\cdot (b_x-x)^2 + \Pr[\widehat x = a_x]\cdot (a_x-x)^2 = (b_x-x)(x-a_x)$, we have that the Mean Squared Error (MSE) of the quantization is 
\ifarxiv
\[
MSE(Q,X) = \sum_{x\in X}(b_x-x)(x-a_x).
\]
\else\mbox{$MSE(Q,X) = \sum_{x\in X}(b_x-x)(x-a_x).$}
\fi
\ifarxiv
An alternative error metric is the vector-Normalized MSE (vNMSE)~\cite{vargaftik2021drive,benbasat2026hadamard,benbasat2026turboquant}, which is defined as
\else
An alternative error metric is the vector-Normalized MSE (vNMSE)~\cite{vargaftik2021drive}, which is defined as
\fi
\ifarxiv
\[
    vNMSE(Q,X) = MSE(Q,X) / \norm{X}^2.
\]
\else
$
    vNMSE(Q,X) = MSE(Q,X) / \norm{X}^2.
$
\fi
vNMSE is often a more convenient metric as it is scale free (i.e., $vNMSE(Q,X)=vNMSE(c\cdot Q,c\cdot X)$ for any $c\neq 0$) which allows comparing methods for \mbox{various input dimensions.}

Adaptive SQ (ASQ) is the problem where we are given just $X\in\mathbb R^d$ and $s\ge 2$ and are asked to output $Q$ of size $s$ that minimizes $MSE(Q,X)$ (or equivalently, $vNMSE(Q,X)$) for \mbox{the specific $X$.}

Optimal ASQ methods such as~\cite{zhang2017zipml,ben2024optimal} operate by solving the following dynamic program (DP); they define $dp_{ASQ}(i,j)$ as the minimal MSE attainable by stochastically quantizing the $j$ smallest entries in $X$ using $i$ quantization values.
They define $C[k,j]=\sum_{\ell= k}^j(x_j-x_\ell)(x_\ell-x_k)$ as the sum of variances of all entries in $[x_k,x_j]$ assuming that they are quantized between $x_k$ and $x_j$.
This formulation can then be solved using the recurrence for $dp_{ASQ}(i,j):$
{
\begin{align}
     \begin{cases}
        \min\limits_{k\in\{1,\ldots,j\}} dp_{ASQ}(i-1,k) + C[k,j] & \mbox{If $i > 2$}\\
        C[k,j]        & \mbox{Otherwise}
    \end{cases}.\label{eq:asqdp}
\end{align}
}
\!Intuitively, this assumes that $x_j$ is a quantization value and finds the optimal placement $x_k$ of the next value.
This dynamic program crucially relies on the fact that there exists an optimal solution $Q^*$ in which $Q^*\subseteq X$, i.e., where all quantization values are entries {in the input} \cite{zhang2017zipml,ben2024optimal}. This problem can be solved optimally in $O(s\cdot d)$ time and space~\cite{ben2024optimal}.
For an integer $n$, we denote $[n]=\{1,\ldots,n\}$.

\section{Entropy Constrained ASQ}
To optimize the set of quantization values $Q$ when an entropy encoding step will follow, we extend the ASQ problem as follows.
The input for the Entropy Constrained Adaptive Stochastic Quantization (ECASQ) problem now includes (in addition to the parameter $s$ that bounds $|Q|$) an entropy bound $b\in\mathbb R^+$ and a set of \textit{permissible} quantization values $P\subset R$. We add the constraints that $H(\widehat X)\le b$ and that $Q\subseteq P$. Here, $H(\widehat X)$ is the entropy of the random vector of the quantized entries. Namely, for each $q\in Q$, let $f_q=\sum_{x\in X}\Pr[\widehat x = q]$ denote the expected frequency of $q$. Then the \textit{average entry} entropy is defined as 
$$
H(\widehat X) = \sum_{q\in Q} -f_q/d \cdot \log_2 f_q/d.
$$
For example, consider $X=\angles{0,1,10}$ and $Q=\set{0,10}$; then $0$ is surely quantized to $0$ and $10$ is quantized to $10$, while the middle entry is quantized to $0$ with probability $9/10$ and to $10$ otherwise. The entropy is therefore $H(\widehat X)=-19/30\log_2(19/30)-11/30\log_2(11/30)\approx 0.948$ \mbox{bits per entry.}

The introduction of $P$ is necessary because, unlike in the ASQ case, the optimal solution may not be a subset of the input, and it also allows us to impose practical constraints on the encoding (e.g., by defining $P$ as all values representable in BF16 or FP32).
Notice that ECASQ generalizes ASQ, which is the special case $b=\log_2 s$ and $P=X$. Although one could remove the separate cardinality parameter by setting $s=|P|$, doing so ignores the size required for representing the codebook. The decoder must know both the selected dictionary $Q$ and the entropy-code metadata; a cap on $s$ therefore gives a direct, implementation-independent bound on this overhead. 
If a stored reconstruction value and its entropy-model descriptor use $w$ and
$c$ bits, respectively, their metadata costs $(w+c)\lvert Q\rvert/d$ bits per
entry.  Since $Q$ is ordered, a level's position implicitly supplies its
symbol-to-value association, so no additional permutation is needed.  Our
experiments use $w=c=16$.

Towards solving ECASQ, we follow previous entropy-aware quantization works (e.g.,~\cite{chou1989entropy}) and introduce a Lagrangian multiplier $\lambda'>0$ to write a revised cost function of 
$MSE(Q,X) + \lambda'\cdot H(\widehat X)$.
In practice, it is often comfortable to use 
$vNMSE(Q,X) + \lambda\cdot H(\widehat X)$ as the cost, which is equivalent to the above when $\lambda = \lambda' / \norm{X}^2$. Notice that this still generalizes ASQ (for $\lambda=0$) and that by a binary search for $\lambda$ we can respect \mbox{the entropy constraint.}

We note that the standard ASQ DP~\eqref{eq:asqdp} is not suitable for the Lagrangian parameterization. This is because the new cost function is not separable, and the expected frequency (and thus, entropy) of a quantization value $x_k$ depends on both adjacent quantization values, not just the one on its right ($x_j$).
Accordingly, we design our optimal and approximate DPs to \mbox{overcome this challenge.}

In what follows, we derive optimal and approximate algorithms for ECASQ. In~\Cref{sec:optDP}, we present an optimal algorithm for the problem that runs in $O(d^2\cdot s)$ time and require $O(d^2)$ space.
Subsequently,~\Cref{sec:apxDP} features an approximation algorithm that requires just $O(d)$ space while producing a solution that, for $b>1$, has an MSE which is upper bounded by the optimal solution \mbox{with $b-1$ bits.}
\ifarxiv
The generalization to $P\neq X$ appears in \appendixlocation{app:general-p}.
\else
In the interest of space, the generalization to $P\neq X$ appears in \appendixlocation{app:general-p}.
\fi

\section{The Optimal Dynamic Program}\label{sec:optDP}
The intuition behind our optimal DP is that, as expected frequency of a quantization value in the quantized vector $\widehat X$ only depends on the two adjacent values in $Q$, we can account for its contribution to the entropy if we track the last \mbox{two quantization values.}

For ease of presentation, we first assume that $P=X$, i.e., that the algorithm may only place quantization values on input entries, same as in \mbox{the ASQ methods.}


We present a new dynamic program that uses two helper arrays, \mbox{for any $1<a<b\in \set{1,\ldots,d}$:}
\begin{align*}
    U[1,b] &= \sum_{i=1}^b \frac{x_i-x_1}{x_b-x_1} \quad ,
    \quad U[a,b] = \sum_{i=a+1}^b \frac{x_i-x_a}{x_b-x_a} \quad \\
    \quad D[1,b] &= \sum_{i=1}^b \frac{x_b-x_i}{x_b-x_1}\quad ,
    \quad D[a,b] = \sum_{i=a+1}^b \frac{x_b-x_i}{x_b-x_a} \ .
\end{align*}
That is, $U[a,b]$ is the sum of all "round-up" probabilities (and $D[a,b]$, the round-down) of entries in the range, assuming two quantization values at $x_a,x_b$ and none in between. The intuition is that the frequency of a value at $x_\ell$, assuming that $x_k,x_\ell,x_j\in Q$ are consecutive, is $U(k,\ell) + D(\ell,j)$ in expectation. \mbox{For all $n$, we set $U[n,n]=D[n,n]=0$.}

For the new dynamic program, we define $dp_{OPT}(i,\ell,j)$ to be the minimal cost for quantizing the $\ell$ smallest entries using $i-1$ quantization values that include $x_\ell$, \emph{plus} the MSE for quantizing $\set{x_\ell,\ldots,x_j}$ assuming that $x_\ell,x_j\in Q$ with no quantization values in between. 
This is \mbox{illustrated in~\Cref{fig:optdp}.}

\begin{figure}
    \centering
    \ifarxiv
    \includegraphics[width=0.58\linewidth]{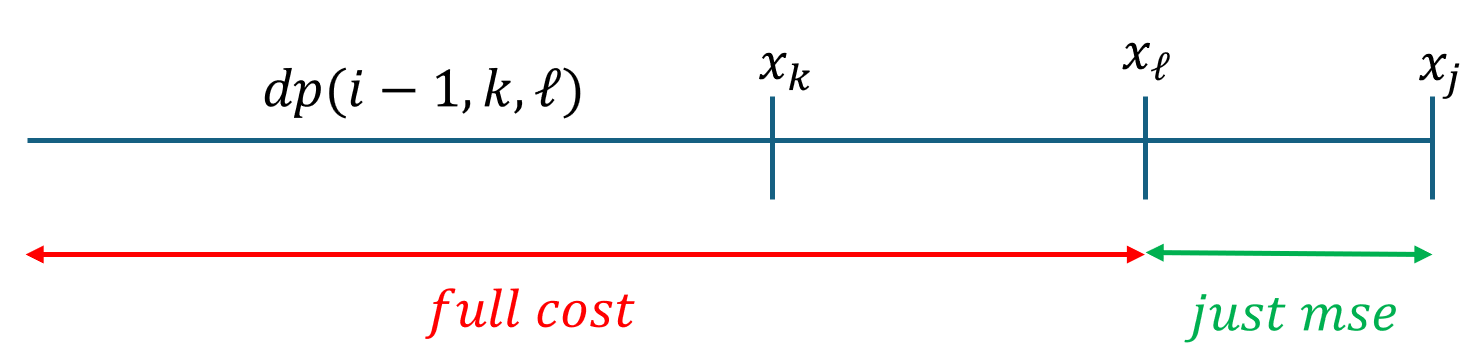}
    \else
    \includegraphics[width=.8\linewidth]{Screenshot_2026-02-20_093525__1_.png}
    \fi
    \caption{The structure of $dp_{OPT}(i,\ell,j)$: we look for the $k$ that minimizes the cost of the prefix up to $x_\ell$ and only the MSE for $[x_\ell,x_j]$.}
    \label{fig:optdp}
\end{figure}

For $y\in[0,1]$, let $h(y)=\begin{cases}
    -y\log_2 y & \mbox{if \ $y >0$}\\
    0 & \mbox{otherwise}  \ 
\end{cases}.$ \\ Assuming we solve $dp_{OPT}(i,\ell,j)$ for all $i\in[s], \ell,j\in[d]$, we have that the optimal cost is:

$$
\min_{\ell\in[d]} \Bigg(\ dp_{OPT}(i,\ell,d) + \lambda\times h\parentheses{\frac{U(\ell,d)}{d}}\ \Bigg) .
$$

$dp(i,\ell,j)$ is solved using the recurrence for $i\ge 3$:

$$
dp_{OPT}(i,\ell,j) = 
    C[\ell,j] +  \min\limits_{k\in[\ell]} \ T_{i,\ell,j}(k)\ ,
$$
\noindent where
{\small
\[
T_{i,\ell,j}(k)\triangleq dp_{OPT}(i-1,k,\ell)
+\lambda\times h\parentheses{\frac{U(k,\ell)+D(\ell,j)}{d}}.
\]
}
The stopping condition for $i=3$ is
\ifarxiv
\[
dp_{OPT}(3,\ell,j)=C[1,\ell]+C[\ell,j]
+\lambda\!\left[
h\parentheses{\frac{D(1,\ell)}{d}}
+h\parentheses{\frac{U(1,\ell)+D(\ell,j)}{d}}
\right].
\]
\else
\[
\resizebox{\columnwidth}{!}{$
\begin{aligned}
dp_{OPT}(3,\ell,j)={}&C[1,\ell]+C[\ell,j]\\
&+\lambda\times\left[
h\parentheses{\frac{D(1,\ell)}{d}}
+h\parentheses{\frac{U(1,\ell)+D(\ell,j)}{d}}
\right].
\end{aligned}
$}
\]
\fi

Naively, computing the value of each $(i,\ell,j)$ entry requires $O(d)$ time (for checking all options for $k$), while there are $s\cdot d^2$ cells in the DP, giving runtime and space bounds of $O(d^3\cdot s)$ and $O(d^2\cdot s)$, respectively. Below, we improve these to $O(d^2\cdot s)$ and $O(d^2)$ using general DP machinery.
We note that the recursion considers duplicates in $Q$ as $\ell\in[\ell]$; one can remove these to obtain a duplicate-free $Q$ with $|Q|\le s$.

\subsection{Optimizing runtime using SMAWK}\label{sec:smawk}
The SMAWK algorithm~\cite{aggarwal1986geometric} is a general technique for accelerating DPs.
The input for the algorithm is a matrix $M\in \mathbb R^{d\times d}$ that satisfies the quadrangle inequality (also known as Monge property):
$$
\forall \texttt a{\le}\texttt  b{\le}\texttt  c{\le}\texttt d: M_{\texttt a,\texttt  c}+M_{\texttt  b,\texttt d} \le M_{\texttt a,\texttt d} + M_{\texttt  b,\texttt  c}.
$$
The algorithm then finds the row minima in $O(d)$ time. That is, it outputs $k_1,\ldots,k_d$ such that $k_y$ is the smallest value in the \mbox{$y$'th row.}

An important property of the quadrangle inequality is stated in the following lemma (proved in \appendixlocation{app:quadrangle}).

\begin{restatable}{lemma}{quadrangleg}\label{lem:quadrangle_g}
    Let $g$ be a concave function and let $v, m$ be monotonically decreasing functions then $G(k,j)\triangleq g(v(k)+m(j))$ satisfies \mbox{the quadrangle inequality.}
\end{restatable}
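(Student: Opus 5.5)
We prove the quadrangle inequality for $G$ directly: fix indices $\texttt a\le\texttt b\le\texttt c\le\texttt d$ and set
\[
\alpha=v(\texttt a),\quad \beta=v(\texttt b),\quad \gamma=m(\texttt c),\quad \delta=m(\texttt d).
\]
Monotonicity gives $\alpha\ge\beta$ and $\gamma\ge\delta$. The target inequality
\[
G(\texttt a,\texttt c)+G(\texttt b,\texttt d)\le G(\texttt a,\texttt d)+G(\texttt b,\texttt c)
\]
then reads
\[
g(\alpha+\gamma)+g(\beta+\delta)\le g(\alpha+\delta)+g(\beta+\gamma).
\]
So everything reduces to a statement about four points on the real line at which the concave function $g$ is evaluated.

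The key observation is that the left-hand side evaluates $g$ at the two extreme points, and the right-hand side at two inner points with the same sum. Indeed,
\[
\beta+\delta\le \alpha+\delta,\ \beta+\gamma\le \alpha+\gamma,
\qquad
(\alpha+\delta)+(\beta+\gamma)=(\alpha+\gamma)+(\beta+\delta).
\]
Write $u=\beta+\delta$ for the smallest point, $w=\alpha+\gamma$ for the largest, and $y_1=\alpha+\delta$, $y_2=\beta+\gamma$ for the two inner points, so $y_1+y_2=u+w$. This is the standard majorization setting, and concavity makes the pair of inner points the larger one.

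To prove this, we handle the degenerate case $u=w$ first: then all four points coincide and equality holds. Otherwise, write each inner point as a convex combination of the endpoints, $y_1=tu+(1-t)w$. The sum constraint forces $y_2=(1-t)u+tw$, with $t\in[0,1]$. Concavity then gives
\[
g(y_1)\ge t\,g(u)+(1-t)\,g(w),\qquad g(y_2)\ge (1-t)\,g(u)+t\,g(w).
\]
Adding these yields $g(y_1)+g(y_2)\ge g(u)+g(w)$, which is exactly the required inequality.

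The only real care needed is the bookkeeping of which combined sums are extreme, since $\alpha+\delta$ and $\beta+\gamma$ are not ordered relative to each other. The convex-combination argument does not need that order: it only needs $y_1,y_2\in[u,w]$, which follows from $\beta\le\alpha$ and $\delta\le\gamma$. We also note that $g$ must be concave on an interval containing all of $v(k)+m(j)$, which holds in the intended application $g=h$ (up to a positive factor) on the range of frequencies. Finally, if $v$ and $m$ were both increasing the same argument applies with the roles of the extremes swapped, but the lemma as stated only needs the decreasing case.
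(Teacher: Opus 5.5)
Your proof is correct and essentially matches the paper's: both express the two inner arguments as complementary convex combinations of the two extreme ones (using that the sums agree), add the two concavity inequalities, and handle the degenerate case separately. The paper states it for arbitrary $k<k'$ and $j<j'$ rather than $\texttt a\le\texttt b\le\texttt c\le\texttt d$, and your argument likewise never uses $\texttt b\le\texttt c$, so it covers that general form too.
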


%
%
%

{
    Fix $i$ and $\ell$. Let the admissible predecessor indices be
$K_\ell=\{k:1\le k<\ell\}$ and the admissible right endpoints be
$J_\ell=\{j:\ell<j\le d\}$. Denote
\[
A(k)=dp_{\mathrm{OPT}}(i-1,k,\ell)
\]
and
\[
G(k,j)=\lambda\cdot h\!\left(\frac{U(k,\ell)+D(\ell,j)}{d}\right).
\]
For our DP, define
\[
M_{k,j}\triangleq T_{i,\ell,j}(k)=A(k)+G(k,j).
\]

The matrix $M$ is not necessarily Monge in the natural order of the
columns, since $D(\ell,j)$ is monotone increasing in $j$. We thus
apply SMAWK on reversed column order. Let
\[
\rho_\ell(r)=d+\ell+1-r,\qquad r\in J_\ell,
\]
be the order-reversing bijection of $J_\ell$, and define
$
\widetilde M_{k,r}=M_{k,\rho_\ell(r)} .
$
The following is proved in \appendixlocation{app:quadrangleM}.

\begin{restatable}{lemma}{quadrangleM}\label{lem:quadrangleM}
For fixed $i$ and $\ell$, the column-reversed matrix
$\widetilde M$ satisfies \mbox{the quadrangle inequality.}
\end{restatable}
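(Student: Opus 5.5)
The plan is to split $\widetilde M_{k,r}=A(k)+\widetilde G(k,r)$, where $\widetilde G(k,r)=G(k,\rho_\ell(r))$. The first step is to observe that the term $A(k)$ depends only on the row index. Adding a function of the row alone leaves the quadrangle inequality unchanged: for rows $\texttt a\le\texttt b$ and columns $\texttt c\le\texttt d$, the contribution $A(\texttt a)+A(\texttt b)$ appears on both sides and cancels. It therefore suffices to show that $\widetilde G$ satisfies the quadrangle inequality. The factor $\lambda\ge 0$ is a nonnegative scaling and does not affect the inequality either.

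Next, I would put $\widetilde G$ in the form of Lemma~\ref{lem:quadrangle_g}. Take
\[
g(y)=h(y/d),\qquad v(k)=U(k,\ell),\qquad m(r)=D(\ell,\rho_\ell(r)).
\]
The function $h$ is concave on $[0,1]$, since $-y\log_2 y$ has second derivative $-1/(y\ln 2)<0$ and is continuous at $0$. Composing with the linear map $y\mapsto y/d$ preserves concavity. For $g$ to be evaluated only where it is concave, I need $U(k,\ell)+D(\ell,j)\le d$. This holds because the two sums run over disjoint index ranges, $(k,\ell]$ and $(\ell,j]$, and each summand is a probability in $[0,1]$. If needed, $g$ can be extended concavely beyond $[0,1]$, so this is harmless.

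The remaining step is the monotonicity claims, which I expect to be the main point to verify carefully. For $m$: each term $(x_j-x_i)/(x_j-x_\ell)=1-(x_i-x_\ell)/(x_j-x_\ell)$ is nondecreasing in $j$ because $x$ is sorted. Moreover, increasing $j$ adds new nonnegative terms. Hence $D(\ell,j)$ is nondecreasing in $j$, and after the order reversal $\rho_\ell$, $m(r)$ is nonincreasing in $r$.

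For $v$: as $k$ increases toward $\ell$, each shared summand $(x_i-x_k)/(x_\ell-x_k)=1-(x_\ell-x_i)/(x_\ell-x_k)$ decreases, since $x_\ell-x_k$ shrinks. In addition, the index range $(k,\ell]$ loses terms. So $U(k,\ell)$ is nonincreasing in $k$. The boundary case $k=1$ of the definition of $U$ also includes $i=1$, but that term is $0$, so the formula is consistent.

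With both $v$ and $m$ monotonically nonincreasing and $g$ concave, Lemma~\ref{lem:quadrangle_g} gives the quadrangle inequality for $\widetilde G$. Adding back $A(k)$ then gives it for $\widetilde M$.

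As a sanity check on the direction, I would recompute the inequality directly. Put $\texttt a\le\texttt b$, so $v(\texttt a)\ge v(\texttt b)$, and $\texttt c\le\texttt d$, so $m(\texttt c)\ge m(\texttt d)$. The inequality then reduces to the standard fact that, for concave $g$, increments $g(y+\delta)-g(y)$ are nonincreasing in $y$. The only technical subtlety is handling possibly infinite or undefined entries, for admissibility or when $x$ has ties; assigning $+\infty$ consistently preserves the Monge structure on the admissible ranges $K_\ell\times J_\ell$.
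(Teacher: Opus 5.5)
Your proposal is correct and follows essentially the same route as the paper's proof. Both drop the row-only term $A(k)$, set $v(k)=U(k,\ell)$ and $m(r)=D(\ell,\rho_\ell(r))$, and establish the monotonicity termwise together with the dropped or added nonnegative terms, with the concavity of $h$ on the relevant range. Both then invoke Lemma~\ref{lem:quadrangle_g}; your separate treatment of $\lambda\ge 0$ and your explicit disjoint-range argument for $U(k,\ell)+D(\ell,j)\le d$ are only cosmetic differences.
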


Since $\widetilde M$ is Monge, so is its transpose. Therefore SMAWK
can be run on $\widetilde M^{\mathsf T}$ to compute, in
$O(|K_\ell|+|J_\ell|)=O(d)$ time, a minimizing predecessor
\[
k^\ell_j \in \arg\min_{k<\ell} M_{k,j}
\]
for every $j\in J_\ell$. We then fill the DP entries as
\[
dp_{\mathrm{OPT}}(i,\ell,j)
=
C[\ell,j]+T_{i,\ell,j}(k^\ell_j).
\]
For $i>3$, we invoke SMAWK for every $\ell\in[d]$ and use its output $\set{k_j^\ell}$ to fill in $d$ entries in the matrix, as:
{
$$
dp_{OPT}(i,\ell,j) = C[\ell,j] + T(k_j^\ell).
$$
}

Overall, as we invoke the $O(d)$-time SMAWK algorithm for each $i\in[s],\ell\in[d]$, the \mbox{runtime becomes} $O(d^2\cdot s)$.

\subsection{Optimizing space using Hirschberg}\label{sec:hirschberg}
We now explain how to apply Hirschberg's algorithm~\citeyearpar{hirschberg1975linear} to \mbox{our optimal DP.}

The first step is to observe that while we defined $dp_{OPT}(i,\ell,j)$ to be the minimal cost for quantizing the $\ell$ smallest entries in $X$ (henceforth referred to as the ``forward algorithm''), an equivalent algorithm can define $dp_{OPT}^B(i,\ell,j)$ to be the cost for the largest entries (the "backward algorithm").
%
%
Specifically, we define $dp_{OPT}^B(i,\ell,j)$ as the minimal cost for quantizing the $d-j$ \emph{largest} entries using $i-1$ quantization values that include $x_j$, \emph{plus} the MSE for quantizing $\set{x_\ell,\ldots,x_j}$ assuming that $x_\ell,x_j\in Q$ with no quantization values in between. 
The recurrence is then, for $i\ge 3$:

$$
dp_{OPT}^B(i,\ell,j) = 
    C[\ell,j] +  \min\limits_{k\in[d]\setminus [j-1]} \ T^B_{i,\ell,j}(k)\ ,
$$
\noindent where
{\small
\[
T^B_{i,\ell,j}(k)\triangleq dp_{OPT}^B(i-1,j,k)
+\lambda\times h\parentheses{\frac{U(\ell,j)+D(j,k)}{d}}.
\]
}

The Hirschberg method follows a divide-and-conquer approach: 
it uses the forward algorithm to compute  $dp_{OPT}(i,\ell,j)$, for all $\ell,j\in[d]$ and $i\in[\floor{s/2}]$.
It then uses the backward algorithm to compute $dp_{OPT}^B(i,\ell,j)$ for all $i,\ell,j\in[d],\in[\ceil{s/2}]$.
Importantly, the algorithm only keeps a pair of layers $(i,i+1)$ as it progresses, keeping the memory \mbox{bounded by $O(d^2)$.}

Next, denoting $t=\floor{s/2}$, it finds the values $\ell^*,j^*$ that minimize
$$
dp_{OPT}(t+1,\ell^*,j^*) + dp_{OPT}^B(s-t+1,\ell^*,j^*) - C[\ell^*,j^*] \  \ .
$$
Intuitively, the optimal minimum is obtained when $\ell^*,j^*$ are the middle quantization values in \mbox{the optimal solution.}

$dp_{OPT}(\floor{s/2},\ell^*,j^*)$ and $dp_{OPT}^B(\ceil{s/2},\ell^*,j^*)$ are then recursively computed while reusing memory across the nested calls (i.e., we keep just $\ell^*,j^*$ from the initial run and once $dp_{OPT}(\floor{s/2},\ell^*,j^*)$ terminates, we reuse the memory except the inner minimizer indices.)

We state the resulting asymptotics (proof in \appendixlocation{app:Hirschberg_opt}).

\begin{restatable}{lemma}{Hirschbergopt}\label{lem:Hirschberg_opt}
Using Hirschberg's algorithm, the space complexity of our DP is $O(d^2)$ while the time \mbox{complexity remains $O(d^2\cdot s)$.}
\end{restatable}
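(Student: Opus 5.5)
The plan is to analyze the space and time of the Hirschberg recursion separately, using the structure of a standard divide-and-conquer recurrence. The unit of work at the top level is one forward sweep over layers $i\le \floor{s/2}+1$ and one backward sweep over layers $i\le \ceil{s/2}+1$. Each sweep fills $d^2$ cells per layer using the SMAWK acceleration of \Cref{sec:smawk}; by \Cref{lem:quadrangleM}, the backward recurrence is amenable to it as well, by the symmetric argument with rows and columns reversed. Hence a sweep over $t$ layers costs $O(t\cdot d^2)$ time. The recurrence for layer $i$ reads only layer $i-1$, so we store just two consecutive $d\times d$ layers, together with the $O(d^2)$ helper arrays $U,D$ and $O(d)$ prefix sums from which $C[\ell,j]$ is computed in $O(1)$. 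The space of one sweep is therefore $O(d^2)$. Combining the two sweeps to find $(\ell^*,j^*)$ is a single $O(d^2)$ scan over the two final layers.

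For space, the recursive calls run sequentially, and each reuses the same $O(d^2)$ working buffer. The only state that persists across nesting levels is the chosen split indices $(\ell^*,j^*)$, so $O(1)$ per call. The recursion depth is $O(\log s)$, and in total $O(s)$ quantization values are recorded, so the space is $O(d^2+s)=O(d^2)$, since $s\le p=d$ in this setting.

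For time, let $T(s)$ denote the cost on a subproblem with $s$ values. A subproblem is now a segment of the input, either the prefix ending at $x_{\ell^*}$ or the suffix starting at $x_{j^*}$, with the boundary pair fixed. The key point is to charge each subproblem by the full $d^2$ per layer, not by its segment length, as a safe upper bound. Then $T(s)\le c\,s\,d^2 + T(\floor{s/2}) + T(\ceil{s/2})$ gives $O(s d^2\log s)$, which is too weak. I would therefore exploit the segment structure: a subproblem on a segment of length $n$ costs $O(s' n^2)$. Since the two child segments overlap only in the boundary interval $[x_{\ell^*},x_{j^*}]$, their lengths satisfy $n_1+n_2\le n+O(1)$, so $n_1^2+n_2^2\le n^2$ up to lower-order terms. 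Summing over one level of the recursion tree gives $\sum s' n^2$. The layer counts halve at each level while the segment lengths are partitioned, so level $r$ costs at most $(s/2^r)\,d^2$. Summing this geometric series gives $O(s d^2)$.

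The main obstacle is justifying that the subproblems really are independent, smaller instances. The child's cost must depend only on its segment and the fixed boundary, and this includes the entropy term linking $x_{\ell^*}$ to its left neighbour via $U(k,\ell^*)+D(\ell^*,j^*)$. To handle this, I would define the recursive calls with the boundary pair $(\ell^*,j^*)$ fixed as the terminal cell. In that form the forward DP restricted to indices $\le j^*$ computes exactly $dp_{OPT}(\cdot,\ell^*,j^*)$ with the correct final entropy term. The correctness of the split then follows from the decomposition $dp_{OPT}+dp^B_{OPT}-C[\ell^*,j^*]$, which counts the shared interval's MSE once and each entropy term once.
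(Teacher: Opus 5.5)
Your proposal is correct and follows essentially the same route as the paper: SMAWK gives $O(n^2)$ time per layer with only two layers stored, and the split is the crossing pair $(\ell^*,j^*)$ minimizing $dp_{OPT}+dp^B_{OPT}-C[\ell^*,j^*]$. As in the paper, the $O(d^2)$ buffer is reused sequentially, and the time bound comes from the children's candidate sets being disjoint up to $O(1)$ fixed boundary indices (so $\sum_u n_u^2=O(d^2)$ per depth) while the layer counts halve.

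You gloss two details the paper makes explicit. First, the left child's candidates should be $\{1,\ldots,\ell^*\}\cup\{j^*\}$, not all indices up to $j^*$ as your last paragraph loosely says. Second, ceiling slack makes each depth cost $O((s/2^q+1)d^2)$, which adds an $O(d^2\log s)$ term that is absorbed into $O(d^2 s)$.
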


\paragraph{General permissible sets.}
The optimal DP extends from $P=X$ to any sorted permissible set
$P=\langle p_1,\ldots,p_p\rangle$ by indexing states by permissible values and
adjusting the boundary conditions (see \appendixlocation{app:general-p} for the full construction). The resulting optimal algorithm runs in
$O(p^2\cdot s+d)$ time and \mbox{uses $O(p^2+d)$ space.}

\section{\mbox{A Space-Efficient Approximation Algorithm}}\label{sec:apxDP}
The optimal DP from~\Cref{sec:optDP} keeps enough state to charge the entropy contribution of each quantization value only after both of its neighboring quantization values are known. We now present a more space-efficient alternative. The main idea is to optimize a slightly stronger entropy surrogate whose contribution is local to each interval between consecutive quantization values. This removes the need to keep two adjacent quantization values \mbox{in the state.}

Let $Q=\{q_1<\ldots<q_t\}$ be a feasible set of quantization values, with $q_1\le x_1$, $q_t\ge x_d$, and $t\le s$. For each coordinate $x$, define its interval index $I_x\in\{1,\ldots,t-1\}$ such that
$q_{I_x}<x\le q_{I_x+1}$.\footnote{If $x=q_1$, \mbox{we define $I_{x}=1$.}}
Let $B_x\in\{0,1\}$ indicate the stochastic rounding decision, where $B_x=1$ if $x$ is rounded up to $q_{I_x+1}$ and $B_x=0$ otherwise. Thus $\widehat x=q_{I_x+B_x}$.

The approximation comes from optimizing the joint entropy $H(I,B)$ rather than $H(\widehat X)$, where the randomness is over a uniformly chosen coordinate and the stochastic rounding. For a fixed $Q$, the map $(I,B)\mapsto (\widehat X,B)$ is a bijection and therefore
\[
H(I,B)=H(\widehat X,B)=H(\widehat X)+H(B\mid \widehat X).
\]
Since $B=\{B_x\}$ has one bit per entry,
\begin{equation}
    H(\widehat X)\le H(I,B)\le H(\widehat X)+1 .
    \label{eq:apx_entropy_sandwich}
\end{equation}

Before delving into the algorithm, there are several important things to note here: First, finding the minimal MSE solution under the constraint $H(I,B)\le b$ immediately yields an MSE better than the optimal solution with one fewer bit per entry (i.e., for $H(\widehat X)\le b-1$).
This is because, by~\Cref{eq:apx_entropy_sandwich}, the optimal solution for $H(\widehat X)\le b-1$ is a feasible solution for $H(I,B)\le b$; if our algorithm yields a different solution, its MSE \mbox{cannot be larger.}

Second, while we are optimizing $H(I,B)$, this is done to find a good set $Q$, but the quantizer still tries to save a bit (pun intended) by compressing the quantized values in $\widehat X$; thus, our actual \mbox{overhead is $H(B\mid \widehat X)$.}

Third, while this overhead is bounded by one bit per entry, its actual value is dependent on $X$ and $Q$. On one extreme, consider the case where $X$ has $0,1$ and $d/2-2$ occurrences of $(1/2-\epsilon)$ and $(1/2+\epsilon)$ each. If $Q=\set{0,1/2,1}$, then all the $(1/2\pm\epsilon)$ are rounded to $1/2$, giving $H(\widehat X)\approx 0$. In contrast, $H(B\mid \widehat X)\approx 1$, since knowing that $\widehat x=1/2$ reveals little about whether $x=1/2-\epsilon$ or $x=1/2+\epsilon$. For the other extreme, let $X$ have $0,1$ and $d/2-2$ occurrences of $(\epsilon)$ and $(1-\epsilon)$ each. If $Q=\set{0,1/2,1}$, then all the $(\epsilon)$s are rounded to $0$ and the $(1-\epsilon)$ entries are rounded to $1$, so $H(B\mid \widehat X)\approx 0$ since $\widehat x$ implies whether $B_x=0$ or $B_x=1$.
In the evaluation, we quantify the magnitude of $H(B\mid \widehat X)$ in practice for input sizes in which running the optimal \mbox{algorithm is feasible.}

Let $P=\langle p_1,\ldots,p_p\rangle$ be sorted and denote, for $a<b$, $X_{a,b}=\{x\in X\mid p_a<x\le p_b\}$. 
For $b>1$, we define $X_{1,b}=\{x\in X \mid p_1 \le x \le p_b\}$, and for
$1<a<b$, define $X_{a,b}=\{x\in X \mid p_a < x \le p_b\}$. We use this
half-open convention only to avoid double-counting entries that are themselves
permissible quantization values. The MSE contribution is unaffected by the
choice of endpoint convention, since entries equal to a quantization value have
zero variance. We define
\[
C_P[a,b] =
\sum_{x\in X_{a,b}} (p_b-x)(x-p_a),
\]
and similarly
{\small
\[
U_P[a,b]=\sum_{x\in X_{a,b}} \frac{x-p_a}{p_b-p_a},
\qquad
D_P[a,b]=\sum_{x\in X_{a,b}} \frac{p_b-x}{p_b-p_a}.
\]
}
Thus, if $p_k,p_\ell,p_j$ are three consecutive quantization values, the
expected frequency of $p_\ell$ is
$
U_P[k,\ell] + D_P[\ell,j].
$

For a non-first interval $(p_a,p_b]$, the two interval-bit symbols have probabilities $U_P[a,b]/d$ and $D_P[a,b]/d$, so define
\[
    \Phi_P[a,b]
    =h\!\left(\frac{U_P[a,b]}{d}\right)
    +h\!\left(\frac{D_P[a,b]}{d}\right).
\]
For the first interval, entries equal to the first quantization value are assigned to the down symbol, and we use
\[
    \Phi^{\mathrm{first}}_P[a,b]
    =h\!\left(\frac{U_P[a,b]}{d}\right)
    +h\!\left(\frac{D_P[a,b]+n_a}{d}\right),
\]
where $n_a=|\{x\in X\mid x=p_a\}|$.
Consequently, if $Q=\{p_{a_1}<\ldots<p_{a_t}\}$, then
\begin{equation}
    H(I,B)
    =\Phi^{\mathrm{first}}_P[a_1,a_2]
    +\sum_{r=2}^{t-1}\Phi_P[a_r,a_{r+1}] .    \label{eq:apx_additive_entropy}
\end{equation}
This is the separability property that the true entropy $H(\widehat X)$ \mbox{does not have.}

We now write the Lagrangian DP for a fixed multiplier $\lambda\ge0$. Let $L=\{a\in[p]\mid p_a\le x_1\}$ and $R=\{a\in[p]\mid p_a\ge x_d\}$. Define $dp_{APX}(i,j)$ to be the minimum surrogate cost of all partial solutions that use $i$ quantization values and whose rightmost quantization value is $p_j$. The initialization for two quantization values is
\begin{equation}
    dp_{APX}(2,j)
    =
    \min_{\substack{a\in L\\ a<j}}
    \left\{
        C_P[a,j]
        +\lambda\,\Phi^{\mathrm{first}}_P[a,j]
    \right\} .
    \label{eq:apx_base}
\end{equation}
For $i\ge3$, the recurrence is
\begin{equation}
\footnotesize
dp_{APX}(i,j)
=\min_{k<j}\Bigl\{dp_{APX}(i-1,k)
+C_P[k,j]+\lambda\,\Phi_P[k,j]\Bigr\}.
    \label{eq:apx_rec}
\end{equation}
The best value using at most $s$ quantization values is then
\begin{equation}
    \min_{2\le i\le s}\min_{j\in R} dp_{APX}(i,j) .
    \label{eq:apx_final}
\end{equation}
If exactly $s$ values are required, the outer minimum is \mbox{replaced by $i=s$.}

The recurrence has the same structure as the standard ASQ recurrence: once the previous quantization value $p_k$ and the new value $p_j$ are fixed, the cost of the new interval is completely determined. 
In~Appendix~\ref{app:general-p}, using preprocessing of $O(d+p)$ time and space, we show how each of $C_P[k,j]$, $U_P[k,j]$, and $D_P[k,j]$ is computed in $O(1)$ time. Thus, one DP layer costs $O(p^2)$ time, and all $s$ layers cost $O(sp^2+d)$ time. We store only the previous and current DP layers, as well as the $O(p+d)$ prefix information, giving $O(p+d)$ space for the value computation. In particular, when $P=X$, this is $O(sd^2)$ time \mbox{and $O(d)$ space.}

To recover the quantization values while preserving the same asymptotic space bound, we use the Hirschberg-based traceback idea as in~\Cref{sec:optDP}, specialized to the one-dimensional recurrence in~\eqref{eq:apx_rec}. A forward pass computes the best cost up to the middle number of quantization values, a backward pass computes the corresponding suffix costs, and their sum identifies a middle quantization value. Recursing on the two sides reconstructs $Q$ while reusing the same two DP layers. This keeps the peak memory at $O(p+d)$ and the running time at $O(sp^2+d)$. Note that while the runtime is asymptotically equal to that of the optimal algorithm of~\Cref{sec:optDP}, the quadratic space reduction is critical if \mbox{$p$ is large.}

For a target entropy budget $b$, we tune $\lambda$ and keep the lowest-MSE solution whose surrogate entropy $H(I,B)$ is at most $b$. The next lemma summarizes \mbox{the resulting guarantee.}

\begin{lemma}
Let $OPT(\beta)$ be the minimum MSE among feasible quantizers with $H(\widehat X)\le \beta$, and, for $b > 1$, let $Q_{APX}(b)$ be the minimum-MSE quantizer with $H(I,B)\le b$. Then $H(\widehat X_{Q_{APX}(b)})\le b$ and
\[
    MSE(Q_{APX}(b),X)\le OPT(b-1),
\]
where $OPT(\beta)=\infty$ if the budget \mbox{$\beta$ is infeasible.}
\end{lemma}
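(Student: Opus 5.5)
The plan is to derive both claims directly from the entropy sandwich in \Cref{eq:apx_entropy_sandwich}, which holds for every fixed feasible codebook $Q$. We do not use anything specific to the DP. The lemma concerns the exact minimizer $Q_{APX}(b)$ of MSE subject to $H(I,B)\le b$, so the whole argument reduces to comparing feasible sets.

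\textbf{Entropy bound.} Let $Q=Q_{APX}(b)$. By definition $Q$ satisfies $H(I,B)\le b$. The left inequality of \Cref{eq:apx_entropy_sandwich} gives $H(\widehat X_Q)\le H(I,B)$, so $H(\widehat X_Q)\le b$. Here I would briefly recheck the sandwich itself. The map $(I,B)\mapsto(\widehat X,B)$ is a bijection because $\widehat x=q_{I_x+B_x}$ and $I_x=\mathrm{index}(\widehat x)-B_x$. Entropy is invariant under bijections, and the chain rule gives $H(\widehat X,B)=H(\widehat X)+H(B\mid\widehat X)$. Since $B$ is binary, $0\le H(B\mid \widehat X)\le 1$.

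\textbf{MSE bound.} If $OPT(b-1)=\infty$, there is nothing to prove. Otherwise, let $Q^*$ attain $OPT(b-1)$; a minimizer exists because $P$ is finite and there are finitely many codebooks $Q\subseteq P$ with $|Q|\le s$. Then $Q^*$ spans $X$, satisfies $|Q^*|\le s$ and $Q^*\subseteq P$, and has $H(\widehat X_{Q^*})\le b-1$. The right inequality of the sandwich gives $H(I,B)_{Q^*}\le b-1+1=b$, so $Q^*$ is feasible for the surrogate-constrained problem. Minimality of $Q_{APX}(b)$ over that feasible set then yields $MSE(Q_{APX}(b),X)\le MSE(Q^*,X)=OPT(b-1)$.

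\textbf{Main obstacle.} The only delicate point is making sure $I$ and $B$ are well defined, so that the bijection and the one-bit bound hold in every case. This matters for entries equal to a quantization value, especially $x=q_1$, which the footnote assigns to $I_x=1$ with $B_x=0$. It also matters for the half-open interval convention used in $\Phi_P$ and $\Phi^{\mathrm{first}}_P$. I would check that each such entry is quantized deterministically and still yields a unique pair $(I_x,B_x)$, so that the surrogate entropy given by \Cref{eq:apx_additive_entropy} really equals $H(I,B)$ for the pair used in the sandwich.
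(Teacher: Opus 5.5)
Your proposal is correct and follows the paper's proof: the left inequality of the sandwich $H(\widehat X)\le H(I,B)\le H(\widehat X)+1$ gives $H(\widehat X_{Q_{APX}(b)})\le b$. The right inequality shows that an optimal quantizer for budget $b-1$ is feasible for the surrogate problem at budget $b$, so the minimality of $Q_{APX}(b)$ finishes the argument. Your extra checks (the $OPT(b-1)=\infty$ case, existence of a minimizer by finiteness of $P$, and well-definedness of $(I_x,B_x)$ at quantization points) are sound details that the paper leaves implicit.
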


\begin{proof}
The feasibility of the returned quantizer for the original entropy budget follows directly \mbox{from~\eqref{eq:apx_entropy_sandwich}: $H(\widehat X_{Q_{APX}(b)})\le H(I,B)\le b$.}

Let $Q^-$ be an optimal solution for budget $b-1$. Then $H(\widehat X_{Q^-})\le b-1$. By~\eqref{eq:apx_entropy_sandwich}, the same quantizer satisfies $H(I,B)\le H(\widehat X_{Q^-})+1\le b$, so it is feasible for the surrogate problem with budget $b$. Since $Q_{APX}(b)$ minimizes MSE over the surrogate-feasible quantizers, its MSE is \mbox{at most $MSE(Q^-,X)=OPT(b-1)$.}
\end{proof}
\begin{figure*}[t]
    \centering
    \ifarxiv
    \includegraphics[width=0.82\linewidth]{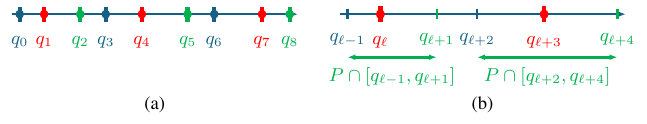}
    \else
    \includegraphics[width=0.592\linewidth]{refinement-figure.pdf}
    \vspace{-4mm}
    \fi
    \caption{Three-color refinement of a quantizer. (a) The quantization values in $Q$ are partitioned by index into $Q_0$ (blue), $Q_1$ (red), and $Q_2$ (green). (b) During the $Q_1$ step, each red value is optimized over its range while neighboring values remain fixed. Consecutive red values have disjoint ranges, so all values in $Q_1$ are updated in parallel; the other two \mbox{substeps are analogous.}}
    \label{fig:refinement}
\end{figure*}

\section{The Refinement Algorithm}\label{sec:refinement}
Due to its space efficiency and GPU-friendliness, the approximation algorithm described above allows running ECASQ on large inputs and data types.
To alleviate the accuracy gap between its output and the optimal algorithm of~\Cref{sec:optDP} while maintaining the speed advantage, we iteratively improve the cost \mbox{while maintaining unbiasedness.}

Our starting point is a feasible solution $Q$ (e.g., one that is generated by the approximation algorithm). \footnote{Without loss of generality we assume that $|Q|=s$; otherwise one can duplicate one of the entries in $Q$ without \mbox{quantizing to it.}}
At each iteration, $Q$ is updated by improving one of three sets: $Q_{0}=\{q_{3\cdot k}\mid k\in\mathbb N, 3k<s\}$, $Q_{1}=\{q_{3\cdot k + 1}\mid k\in\mathbb N, 3k + 1<s\}$, $Q_{2}=\{q_{3\cdot k + 2}\mid k\in\mathbb N, 3k + 2<s\}$. 
When changing the position of $q_\ell$, we use binary search to improve its position in $P\cap [q_{\ell-1},q_{\ell+1}]$, assuming that $q_{\ell-1}$ and $q_{\ell+1}$ remain fixed.
This is achieved by minimizing the sum of variances of all entries $x\in X\cap [q_{\ell-1},q_{\ell+1}]$ plus $\lambda$ times the expected normalized frequency of $q_\ell$, which can be efficiently computed \mbox{using $U_P,D_P$ (\appendixlocation{app:general-p}).}

We choose these three sets as the quantization values in $Q_i$ are independent from each other: moving one does not change the expected frequency of another, and thus we can optimize the values of a $Q_i$ in parallel (while keeping other values fixed) on the GPU, as \mbox{illustrated in~\Cref{fig:refinement}.}

Namely, let $q_{3k+i}$ and $q_{3(k+1)+i}$ be consecutive values in $Q_i$. Then small entries ($X\cap[x_1,q_{3k+i+1}]$) cannot be quantized to $q_{3(k+1)+i}$, while large entries ($X\cap[q_{3k+i+2},x_d]$) cannot be quantized to $q_{3k+i}$, yielding a complete separation of the MSE and \mbox{entropy across $Q_i$.}

While the refinement algorithm is not guaranteed to reach the optimal solution, its output is always at least as good as the starting $Q$, and as we show in the evaluation (\Cref{sec:eval}), it usually is near-optimal while being very quick compared to the \mbox{approximate solution itself.}


\ifarxiv
\else
\fi
\section{Optimality with Two-Way Time-Sharing}\label{sec:discussion}
In this section, we explain the limitations of using the Lagrange relaxation, but also explain how it yields an optimal solution \textit{for any} $b$ when \mbox{time-sharing is allowed.}

\textbf{From Lagrange Multipliers to Entropy Constraints.}
By Everett's theorem~\cite{everett1963generalized}, if $Q_\lambda$ globally
minimizes $\vnmse(Q,X)+\lambda H(\widehat X)$ and happens to satisfy
$H(\widehat X)=b$, then $Q_\lambda$ is globally optimal for the hard-constrained
problem with entropy budget $b$. Thus, whenever the multiplier search hits the
target entropy exactly, no time-sharing is needed. More generally, optimizing
$\vnmse(Q,X)+\lambda H(\widehat X)$ over $\lambda$ recovers the supported points
of the entropy-vNMSE frontier, but it need not return the optimal single
quantizer for every hard entropy budget $b$ (we provide a counterexample in
\appendixlocation{app:timesharing-lagrange}). Time-sharing
fills the entropy levels skipped by the $\lambda$ search: it convexifies the feasible
rate-distortion region, and every boundary point is attained by a mixture of at
most two supported quantizers. Hence a sweep over $\lambda$ recovers an optimal
two-way time-share for \mbox{every entropy constraint.}

\textbf{Quantizing to non-adjacent values in $Q$.}
Our algorithms use ordinary stochastic quantization between the two adjacent
values of $Q$ that encompass each input. For a single quantizer, allowing arbitrary unbiased rounding to non-adjacent values can yield strictly lower vNMSE under the same entropy constraint (we give an example in \appendixlocation{app:timesharing-nonadjacent}).

\textbf{Quantizing to non-adjacent values under two-way time-sharing.}
To regain optimality for non-supported values of $b$ in the strict entropy bound formulation while still solving the $\lambda$-ECASQ problem and rounding to adjacent values in $Q$, we leverage \textit{time-sharing}~\cite{chou1989entropy}. In our application, time-sharing means running two quantizers on a random partition of the input (determined by a shared pseudorandom number generator seed). 
A standard observation is that time-sharing recovers the optimal entropy-MSE tradeoff. Formally, for an entropy bound $b$, let $Q^-$ be the best quantizer with entropy $b^-\le b$ that can be derived from $\lambda$-ECASQ for some $\lambda$, and similarly let  $Q^+$ be the quantizer attainable for some $\lambda'$ with the lowest rate $b^+\ge b$.
Then if for each entry we independently use $Q^+$ with probability $\frac{b - b^-}{b^+-b^-}$, the MSE is guaranteed to be no larger than any single quantizer $Q^*$ with entropy $b$, because the vNMSE is linear in the above probability.
We prove a more general property that is specific to our problem: under time-sharing, it is enough to consider pairs of quantizers that only quantize to adjacent values in their quantization value sets (proof is in \appendixlocation{app:timesharing-proof}).


\begin{restatable}{proposition}{adjacenttimesharingdominates}
\label{prop:adjacent-time-sharing-dominates}
Let $Q_1,Q_2\subseteq P$ satisfy $|Q_1|,|Q_2|\le s$, and let $K_1,K_2$ be
arbitrary unbiased rounding rules into $Q_1,Q_2$. For every
$\alpha\in[0,1]$, there exist two sets $S_1,S_2\subseteq P$, each of size at
most $s$, and $\beta\in[0,1]$, such that ordinary {adjacent stochastic
quantization with $S_1,S_2$ satisfies}
\[
(1) \  \beta R(S_1)+(1-\beta)R(S_2)
\le \alpha R(K_1)+(1-\alpha)R(K_2),
\]
\[
(2) \  \beta D(S_1)+(1-\beta)D(S_2)
\le \alpha D(K_1)+(1-\alpha)D(K_2).
\]
Here, $R(S)$ and $D(S)$ denote the entropy and vNMSE of ordinary adjacent
stochastic quantization with $S$.
\end{restatable}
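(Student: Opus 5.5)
The plan is to reduce everything to a statement about supported points of the achievable rate--distortion region. Let $\mathcal A$ be the set of pairs $(R(K),D(K))$ over all unbiased rounding rules $K$ into sets $Q\subseteq P$ with $|Q|\le s$. Since $P$ is finite, there are finitely many supports. For each support, the set of unbiased kernels is a compact polytope, and $R,D$ are continuous on it. Hence $\mathcal A$ is compact and its convex hull $\mathrm{conv}(\mathcal A)$ is compact. The point $z=\alpha(R(K_1),D(K_1))+(1-\alpha)(R(K_2),D(K_2))$ lies in $\mathrm{conv}(\mathcal A)$. Moving from $z$ down-left to the lower boundary of $\mathrm{conv}(\mathcal A)$ gives a point that dominates $z$ in both coordinates. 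In the plane, each such point is a convex combination of at most two extreme points $z_1,z_2$ of the hull that lie on its lower-left boundary. Each such extreme point minimizes $D+\lambda R$ over $\mathcal A$ for some $\lambda\ge0$ (or is an endpoint obtained as a limit, handled by compactness). So it suffices to prove the following key claim: \emph{for every $\lambda\ge0$ and every unbiased rule $K$ into $Q$, there is $S\subseteq Q$ such that adjacent SQ with $S$ satisfies $D(S)+\lambda R(S)\le D(K)+\lambda R(K)$.} Applied to minimizers, this shows that $z_1,z_2$ are (weakly dominated by) adjacent-SQ points. Then $S_1,S_2$ and the mixing weight $\beta$ of the two-point combination yield (1) and (2).

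To prove the key claim, I will linearize the entropy. Let $f^0$ be the normalized frequency vector of $K$, and drop from $Q$ all values with $f^0_q=0$; this keeps $|Q|\le s$ and does not change $K$. Set $c_q=-\log_2 f^0_q$. By concavity of $H$ (Gibbs' inequality), every frequency vector $f$ supported on $Q$ satisfies $H(f)\le\sum_q f_q c_q$, with equality at $f=f^0$. The surrogate cost $D+\lambda\sum_q f_q c_q$ is linear in the kernel and separates over entries. For an entry $x$ it equals $\mathbb E[(q-x)^2+\lambda c_q]=\mathbb E[g(q)]-x^2$, where $g(q)=q^2+\lambda c_q$ is defined on $Q$. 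Minimizing $\mathbb E[g(q)]$ over distributions on $Q$ with mean $x$ gives the lower convex envelope of $g$ at $x$. This minimum is attained by the two-point distribution on the adjacent vertices of the lower convex hull of $\{(q,g(q))\}_{q\in Q}$ that bracket $x$. The outermost points $\min Q$ and $\max Q$ are always hull vertices, so the bracketing pair exists.

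Let $S$ be the set of hull vertices. Then adjacent SQ on $S$ minimizes the surrogate simultaneously for all entries, with $|S|\le|Q|\le s$ and $S$ spanning $X$. Writing $H^{\mathrm{lin}}(K)=\sum_q f_q(K)c_q$ for the linearized entropy under a rule $K$, we obtain
\[
D(S)+\lambda R(S)\le D(S)+\lambda H^{\mathrm{lin}}(S)\le D(K)+\lambda H^{\mathrm{lin}}(K)=D(K)+\lambda R(K).
\]
The first inequality is the linearization bound, the second is optimality of $S$ for the surrogate, and the last equality is tightness at $f^0$. This proves the key claim.

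The main obstacle is the geometric wrap-up, not the key claim itself. I must make sure that the dominating boundary point of $\mathrm{conv}(\mathcal A)$ is a combination of two supported points that are each attained exactly, rather than only approached as limits. Compactness of $\mathcal A$ and finiteness of $P$ should handle this. I also need to treat the degenerate cases: $\lambda=0$, which is plain ASQ; $\lambda\to\infty$, the zero-distortion end of the boundary; and $z$ already lying on a vertical or horizontal part of the boundary. In each case I will take the appropriate lexicographic minimizer and apply the key claim to it.
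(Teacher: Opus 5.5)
Your key claim is correct, and it takes a different route from the paper's. The paper proves a structural fact for each rule separately. The expected frequency vector of any unbiased rule into $Q$ is a convex combination of adjacent-SQ frequency vectors on subsets of $Q$; this is shown using the lower convex envelope of $(q,c(q))$ for \emph{arbitrary} linear costs $c$, plus a separation argument in frequency space. Then vNMSE is linear and entropy concave in the frequencies, and an LP extreme-point argument reduces the resulting mixture to two quantizers. Your Gibbs linearization with $c_q=-\log_2 f^0_q$ needs only the single cost $g(q)=q^2+\lambda c_q$; the normalizations $1/\norm{X}^2$ and $1/d$ just rescale $\lambda$. It directly shows that the Lagrangian optimum over arbitrary unbiased rules is attained by adjacent SQ, which is a nice statement in its own right.

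The gap is in the wrap-up, at ``applied to minimizers, this shows that $z_1,z_2$ are (weakly dominated by) adjacent-SQ points.'' The key claim is scalarized. Applied to an extreme point $z_1$ that minimizes $D+\lambda R$, it only produces an $S$ whose point $(R(S),D(S))$ is another minimizer of the same functional. That point lies somewhere on the same supporting line, possibly at the far end $z_2$ of a flat face, not below-left of $z_1$. The claim pins down $z_1$ only when $z_1$ is the unique minimizer, and extreme points of $\operatorname{conv}(\mathcal A)$ need not be exposed. If both endpoints are sent to the same adjacent-SQ point, the boundary point between them is not dominated by anything you have constructed. The minimum-rate direction $\mu\propto(1,0)$ (your ``$\lambda=\infty$'') is also not covered by the claim as stated.

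The missing idea is to use the key claim for all directions at once, exploiting that there are only finitely many adjacent-SQ points. Let $\mathcal A_{\mathrm{adj}}$ be the finite set of pairs $(R(S),D(S))$ over feasible $S\subseteq P$ with $|S|\le s$. Let $C'=\operatorname{conv}(\mathcal A_{\mathrm{adj}})+\mathbb{R}^2_{\ge0}$, a closed polyhedron. If $z\notin C'$, strict separation gives $\mu\ge0$, $\mu\neq0$, with $\mu\cdot z<\min_{a\in\mathcal A_{\mathrm{adj}}}\mu\cdot a$; $\mu$ must be nonnegative because of the recession cone. But $\mu\cdot z$ is the $\alpha$-average of the two values $\mu\cdot(R(K_t),D(K_t))$. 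Your key claim bounds each of these below by that minimum, a contradiction; for $\mu\propto(1,0)$, rerun the linearization with $g(q)=c_q$ alone. Hence $z$ dominates some point of $\operatorname{conv}(\mathcal A_{\mathrm{adj}})$. The reduction to two quantizers is then the LP extreme-point argument: minimize mixture vNMSE subject to mixture rate at most $R_{\mathrm{ts}}$, as in the paper's two-quantizer lemma. This also makes the compactness of $\mathcal A$ and the analysis of its extreme points unnecessary.
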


\noindent We provide an evaluation of the gain attainable by time-sharing in \appendixlocation{app:eval-entropy-frontier}.

\section{Evaluation}\label{sec:eval}

\textbf{Setup.}
Experiments ran on a 12-core Apple M2 with 32 GB RAM under
macOS 15.7.2 (arm64); CPU benchmarks used eight threads with Python 3.9.6 and PyTorch 2.8.0.
 
We evaluate BF16 synthetic vectors of size $d=16{,}384$ over five seeds and
four distributions, together with 100 real-model tensor samples from five
  model families. We compare Optimal ECASQ, Approx ECASQ, five-round refined
Approx ECASQ, and entropy-coded QUIVER, QSGD, and uniform stochastic
quantization.  We report the ideal total rate
$R_{\mathrm{tot}}=H(\widehat X)+32\lvert Q\rvert/d$ bits/value, which includes
a 16-bit BF16 reconstruction value and a 16-bit entropy-model descriptor per
active level.  
Additional details and results appear in \appendixlocation{app:evaluation}.

\begin{figure}[!t]
\vspace*{-0mm}
    \centering
    \includegraphics[width=.789565\columnwidth]{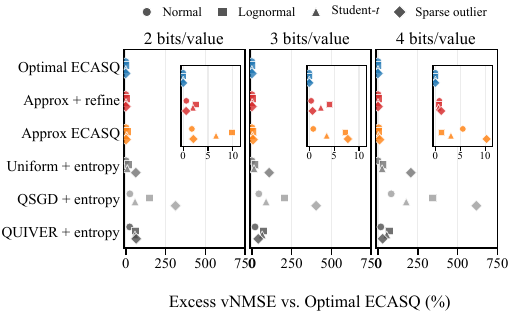}
    \caption{Matched-rate excess vNMSE (five seeds) relative to Optimal ECASQ on synthetic
    BF16 vectors ($d=16{,}384$, at most $s=64$ quantization values).  Lower is better.
    }
    \label{fig:eval-vnmse}
\end{figure}

\textbf{Matched-rate vNMSE.}
Across the 12 distribution-rate comparisons in \Cref{fig:eval-vnmse},
refinement reduces Approx ECASQ's mean excess vNMSE from $4.97\%$ to
$1.37\%$ on the five-seed aggregate curves, with a $4.06\%$ maximum.
Unrefined Approx also outperforms every non-ECASQ baseline in mean at every
rate: its per-rate means are $4.82$--$5.05\%$, versus
$23.23$--$43.36\%$ for the strongest baseline.  Pointwise, it beats QSGD and
QUIVER in all 12 comparisons and uniform quantization in \mbox{11 of 12.}

\begin{figure}[!t]
    \centering
    \vspace{-2mm}
        \includegraphics[width=.546\columnwidth]{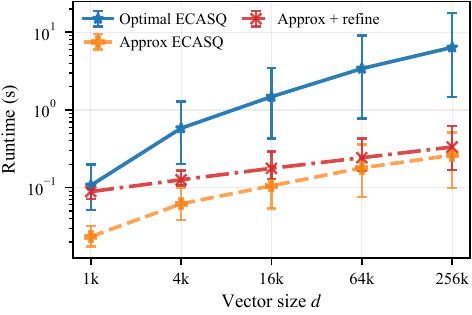}
    \caption{Runtime for BF16 inputs,
    $s=64$, and $\lambda=0.1$. 
    Marks average the four distribution-specific means (five seeds each), and
    error bars span their min-max range. }
    \ifarxiv
    \vspace*{-4mm}
    \fi
    \label{fig:eval-runtime}
\end{figure}

\textbf{Runtime.}
At $d=262{,}144$, \Cref{fig:eval-runtime} shows that Optimal ECASQ, Approx, and refined Approx take $6.39$, $0.26$, and $0.33$ seconds on average: $24\times$ and $19\times$ speedups.  The gains hold for every tested distribution, ranging
from $13$-$34\times$ for Approx and $8$-$28\times$ after refinement.
Other settings are evaluated in \appendixlocation{app:evaluation}.

\textbf{Real models.} We also quantize weight, activation, and KV-cache tensors from Qwen2.5-0.5B/1.5B, Gemma-3-1B-IT, Llama-3.2-1B, and Phi-3.5-Mini, improving througout all tasks. Additional details and results appear in \appendixlocation{app:evaluation}.

\clearpage

\noindent\textbf{AI usage disclaimer.} We have used ChatGPT and Codex to polish the writing of this paper and help with proof verification and simplification as well as evaluation design and execution.

\bibliographystyle{plainnat}
\nocite{ivkin2019know,shahout2023together}
\bibliography{aaai2026}
\setcounter{secnumdepth}{2}
\renewcommand\thesubsection{\thesection.\arabic{subsection}}

\appendix
\section{Proof of~\Cref{lem:quadrangle_g}}\label{app:quadrangle}
We restate the lemma for convenience. \ran{Change to non-increasing}
\quadrangleg*
    \begin{proof}
We show that for every $k<k'$ and $j<j'$,
\[
G(k,j)+G(k',j') \le G(k,j')+G(k',j).
\]

Since $v$ and $m$ are monotonically decreasing,
\[
v(k)\ge v(k')
\qquad\text{and}\qquad
m(j)\ge m(j').
\]
Define
\[
\Delta_v \triangleq v(k)-v(k') \ge 0,
\qquad
\Delta_m \triangleq m(j)-m(j') \ge 0.
\]
Also define
\begin{align*}
    A &\triangleq v(k)+m(j),\qquad\ 
B \triangleq v(k)+m(j'),\qquad\\
C &\triangleq v(k')+m(j),\qquad
D \triangleq v(k')+m(j').
\end{align*}

Then
\[
A=D+\Delta_v+\Delta_m,\qquad
B=D+\Delta_v,\qquad
C=D+\Delta_m,
\]
and therefore
\[
A+D=B+C.
\]

If $\Delta_v+\Delta_m=0$, then $A=B=C=D$, and the claim is immediate.  
Assume now that $\Delta_v+\Delta_m>0$, and let
\[
\theta \triangleq \frac{\Delta_m}{\Delta_v+\Delta_m}\in[0,1].
\]
A direct calculation shows that
\[
B=\theta D+(1-\theta)A,
\qquad
C=(1-\theta)D+\theta A.
\]
Since $g$ is concave, we have
\[
g(B)\ge \theta g(D)+(1-\theta)g(A),
\]
and
\[
g(C)\ge (1-\theta)g(D)+\theta g(A).
\]
Adding these two inequalities yields
\[
g(B)+g(C)\ge g(A)+g(D).
\]
Substituting the definitions of $A,B,C,D$, we obtain
\begin{multline*}
g\bigl(v(k)+m(j')\bigr)+g\bigl(v(k')+m(j)\bigr)\\
\ge
g\bigl(v(k)+m(j)\bigr)+g\bigl(v(k')+m(j')\bigr).
\end{multline*}
By the definition of $G$, this is exactly
\[
G(k,j')+G(k',j)\ge G(k,j)+G(k',j'),
\]
or equivalently,
\[
G(k,j)+G(k',j') \le G(k,j')+G(k',j).
\]
Hence $G$ satisfies \mbox{the quadrangle inequality.}
\end{proof}

\section{Proof of~\Cref{lem:quadrangleM}}\label{app:quadrangleM}
We restate the lemma for convenience. 
\quadrangleM*
\begin{proof}
It suffices to prove the claim for the entropy term, since adding the
row-dependent term $A(k)$ preserves \mbox{the quadrangle inequality.}

Define
\[
v(k)\triangleq U(k,\ell),
        \qquad
\widetilde m(r)\triangleq D(\ell,\rho_\ell(r)),
\]
and
\[
g(z)\triangleq \lambda h(z/d).
\]
Because $h(0)=0$ and $h(y)=-y\log_2 y$ for $y>0$, the function $h$ is concave on $[0,1]$. Moreover, every argument used below satisfies \[0\le U(k,\ell)+D(\ell,\rho_\ell(r))\le d.\]
Therefore, $g(z)=\lambda h(z/d)$ is concave on the entire range of arguments appearing in the matrix.

We first show that $v(k)$ is monotone decreasing in $k$. For
$k<k'<\ell$,
\[
\begin{aligned}
U(k,\ell)
&=\sum_{a=k}^{\ell}\frac{x_a-x_k}{x_\ell-x_k}\ge\sum_{a=k'}^{\ell}\frac{x_a-x_k}{x_\ell-x_k}\\
&\ge\sum_{a=k'}^{\ell}\frac{x_a-x_{k'}}{x_\ell-x_{k'}}=U(k',\ell),
\end{aligned}
\]
where the second inequality follows from
\[
\frac{x_a-x_k}{x_\ell-x_k}
-
\frac{x_a-x_{k'}}{x_\ell-x_{k'}}
=
\frac{(x_{k'}-x_k)(x_\ell-x_a)}
     {(x_\ell-x_k)(x_\ell-x_{k'})}
\ge 0 .
\]

Next, $D(\ell,j)$ is monotone increasing in $j$. Indeed, for
$\ell<j<j'$,
\[
D(\ell,j')
-
D(\ell,j)
\ge
\sum_{a=\ell}^{j}
\left(
\frac{x_{j'}-x_a}{x_{j'}-x_\ell}
-
\frac{x_j-x_a}{x_j-x_\ell}
\right),
\]
and each summand is nonnegative since
\[
\frac{x_{j'}-x_a}{x_{j'}-x_\ell}
-
\frac{x_j-x_a}{x_j-x_\ell}
=
\frac{(x_{j'}-x_j)(x_a-x_\ell)}
     {(x_{j'}-x_\ell)(x_j-x_\ell)}
\ge 0 .
\]
The additional terms in $D(\ell,j')$ with indices $a>j$ are also
nonnegative. Hence $D(\ell,j')\ge D(\ell,j)$.

Because $\rho_\ell$ reverses the order of the columns, the function
\[
\widetilde m(r)=D(\ell,\rho_\ell(r))
\]
is monotone decreasing in $r$. Thus both $v$ and $\widetilde m$ are
monotone decreasing, and $g$ is concave. By Lemma~1,
\[
\widetilde G(k,r)
=
g\bigl(v(k)+\widetilde m(r)\bigr)
=
\lambda h\!\left(
\frac{U(k,\ell)+D(\ell,\rho_\ell(r))}{d}
\right)
\]
satisfies the quadrangle inequality. Since
\[
\widetilde M_{k,r}=A(k)+\widetilde G(k,r),
\]
and the row terms $A(k)$ cancel from the two sides of the quadrangle
inequality, $\widetilde M$ also satisfies \mbox{the quadrangle inequality.}
\end{proof}

\section{Proof of~\Cref{lem:Hirschberg_opt}}\label{app:Hirschberg_opt}
We restate the lemma for convenience. 
\Hirschbergopt*

\newcommand{\dopt}{\mathrm{dp}_{\mathrm{OPT}}}
\newcommand{\doptB}{\mathrm{dp}^{B}_{\mathrm{OPT}}}
\newcommand{\Cost}{\mathrm{C}}

\begin{proof}

Consider a subproblem with $n$ candidate indices and $r$ quantization values to reconstruct. A forward DP layer has one entry for each ordered pair $(\ell,j)$ of consecutive relevant boundary indices, so it contains $O(n^2)$ entries. By Lemma 2, for every fixed $\ell$ the corresponding transition matrix is Monge, and SMAWK computes the minimizing predecessor for all choices of $j$ in $O(n)$ time. Since there are $O(n)$ choices of $\ell$, one whole layer is computed in $O(n^2)$ time. The recurrence for layer $i$ depends only on layer $i-1$, so a forward run for $r$ layers uses $O(n^2 r)$ time and only two $O(n^2)$ arrays. The backward recurrence is symmetric and has \mbox{the same bounds.}

At a Hirschberg step, set $t=\lfloor r/2\rfloor$. We run the forward recurrence for $t+1$ layers and the backward recurrence for $r-t+1$ layers, retaining only the current and previous layer in each run. We then scan the final forward and backward layers and choose a pair $(\ell^*,j^*)$ minimizing
\[
  \dopt(t+1,\ell,j)
  + \doptB(r-t+1,\ell,j)
  - \Cost[\ell,j].
\]
For a fixed pair $(\ell,j)$, the first term is the optimal cost of the left partial solution ending at $\ell$ with $j$ as its right neighbor, and the second term is the optimal cost of the right partial solution starting at $j$ with $\ell$ as its left neighbor. The segment cost $\Cost[\ell,j]$ is present in both partial costs, so it is subtracted once. The entropy contributions at the two crossing quantization values are also well-defined: the left pass charges the quantization value at $\ell$ using its left neighbor and $j$, while the right pass charges the quantization value at $j$ using $\ell$ and its right neighbor. Therefore the displayed expression is exactly the best total cost among solutions whose crossing adjacent pair is $(\ell,j)$. Minimizing over all pairs recovers the crossing pair of a globally optimal solution; otherwise, concatenating the two partial solutions for a better pair would give a strictly better \mbox{feasible full solution.}

After $(\ell^*,j^*)$ is fixed, no remaining quantization value can lie strictly between $\ell^*$ and $j^*$. The reconstruction therefore decomposes into two independent recursive subproblems: the left side, which reconstructs the quantization values up to $\ell^*$ with $j^*$ fixed as the outside neighbor, and the right side, which reconstructs the quantization values from $j^*$ onward with $\ell^*$ fixed as the outside neighbor. The two sets of unfixed candidate indices are disjoint. Endpoint convention choices can add only $O(1)$ fixed boundary indices to a subproblem and do not affect \mbox{the asymptotic bounds.}

The space bound follows immediately. During any forward or backward pass, we store only a constant number of $O(n^2)$ layers. The largest subproblem has $n\le d$, so these layers occupy $O(d^2)$ space. The precomputed arrays $U,D$, and $\Cost$ also occupy $O(d^2)$ space. Once a crossing pair is found, we store only that pair and recurse into one side; when that recursive call returns, the same work arrays are reused for the other side. The recursion stack and the output quantization values use at most $O(s)\le O(d)$ additional space, which is dominated by $O(d^2)$. Thus the peak \mbox{space is $O(d^2)$.}

It remains to bound the running time. For a subproblem with parameters $(n,r)$, the two half-runs and the final scan cost
\[
  O(n^2(t+1)) + O(n^2(r-t+1)) + O(n^2) = O(n^2 r).
\]
At recursion depth $q$, each active subproblem has at most $\lceil s/2^q\rceil+O(1)$ quantization values remaining. Moreover, the unfixed candidate sets of the active subproblems are disjoint subsets of the original $d$ indices, so
\[
  \sum_{u} n_u^2 \le \left(\sum_{u} n_u\right)^2 \le d^2,
\]
where $u$ ranges over the subproblems at that depth and $n_u$ is the number of entries in the subproblem. Hence the total work at depth $q$ is at most
\[
  O\!\left(\left(\frac{s}{2^q}+1\right)d^2\right).
\]
Summing over all $O(\log s)$ recursion depths gives
\[
  \sum_{q\ge 0} O\!\left(\left(\frac{s}{2^q}+1\right)d^2\right)
  = O(d^2s) + O(d^2\log s)
  = O(d^2s).
\]
Therefore, Hirschberg reduces the memory to $O(d^2)$ while preserving the \mbox{$O(d^2\cdot s)$ time complexity.}
\end{proof}

\section{Generalizing to Arbitrary $P$}
\label{app:general-p}

We now remove the simplifying assumption that $P=X$. Let
$P=\langle p_1,\ldots,p_p\rangle$ be the sorted set of permissible quantization
values. We assume that $P$ contains at least one value not larger than $x_1$ and
at least one value not smaller than $x_d$; otherwise, no unbiased quantizer with
values in $P$ can represent all \mbox{entries of $X$.}

For every $1\le a<b\le p$, define \[
X_{a,b}=\{x\in X\mid p_a<x\le p_b\}.
\] Thus, entries equal to the left endpoint are excluded from $U_P[a,b]$ and $D_P[a,b]$ for every interval. For the first selected quantization value, its deterministic mass is added explicitly in the initialization below. This convention is also exactly the one implemented by the prefix differences
$N(p_b)-N(p_a)$, $S(p_b)-S(p_a)$, and $S_2(p_b)-S_2(p_a)$. The MSE contribution
is unaffected because entries equal to a quantization value have zero variance.
We define
\[
C_P[a,b] =
\sum_{x\in X_{a,b}} (p_b-x)(x-p_a),
\]
and similarly
{\small
\[
U_P[a,b]=\sum_{x\in X_{a,b}} \frac{x-p_a}{p_b-p_a},
\qquad
D_P[a,b]=\sum_{x\in X_{a,b}} \frac{p_b-x}{p_b-p_a}.
\]
}
We further write $U_P[n,n]=D_P[n,n]$ for all $n$.
Thus, if $p_k,p_\ell,p_j$ are three consecutive quantization values, the
expected frequency of $p_\ell$ is
\[
U_P[k,\ell] + D_P[\ell,j].
\]

Since $P$ may contain values outside the range of $X$, the first and last
quantization values are no longer fixed to $x_1$ and $x_d$. Let
$L=\{a\in[p]\mid p_a\le x_1\}$ be the feasible choices for the first
quantization value, and let $R=\{b\in[p]\mid p_b\ge x_d\}$ be the feasible
choices for the last one. For $a\in L$ and $a<j$, we initialize
\[
dp^P_{\mathrm{OPT}}(2,a,j)
=
C_P[a,j]
+
\lambda\cdot
h\!\left(
\frac{n_a + D_P[a,j]}{d}
\right),
\]
where $n_a = |\{x\in X \mid x=p_a\}|$. This state corresponds to using
$p_a$ and $p_j$ as the two current boundary values, while charging the entropy
contribution of the left boundary $p_a$ and leaving the contribution of $p_j$
to \mbox{be charged later.}

For $i\ge 3$, the recurrence is identical to the one above after replacing
input indices by permissible-value indices:
{\scriptsize
\[
\begin{aligned}
dp^P_{\mathrm{OPT}}(i,\ell,j)
={}&C_P[\ell,j]+\min_{k<\ell}\Bigl\{\\[-0.5ex]
&dp^P_{\mathrm{OPT}}(i-1,k,\ell)+\lambda\cdot h\!\left(
\frac{U_P[k,\ell]+D_P[\ell,j]}{d}
\right)\Bigr\}.
\end{aligned}
\]
}
Finally, the optimal value using at most $s$ quantization values is
\[
\min_{2\le i\le s}
\min_{\substack{\ell<j\\ j\in R}}
\left\{
dp^P_{\mathrm{OPT}}(i,\ell,j)
+
\lambda\cdot
h\!\left(
\frac{U_P[\ell,j]}{d}
\right)
\right\}.
\]
If exactly $s$ values are required, the outer minimum over $i$ is \mbox{replaced by
$i=s$.}

The quantities $C_P,U_P,D_P$ can be computed efficiently using prefix sums over
the sorted input. Let $N(t)$, $S(t)$, and $S_2(t)$ denote the number, sum, and
sum of squares of entries of $X$ that are at most $t$. Then for every $a<b$,
letting
{\small
\[
\begin{aligned}
N_{a,b}&=N(p_b)-N(p_a),\qquad
S_{a,b}=S(p_b)-S(p_a),\\
S^{(2)}_{a,b}&=S_2(p_b)-S_2(p_a),
\end{aligned}
\]
}
we have
{\small
\[
U_P[a,b]=\frac{S_{a,b}-p_aN_{a,b}}{p_b-p_a},
\qquad
D_P[a,b]=\frac{p_bN_{a,b}-S_{a,b}}{p_b-p_a},
\]
}
and
\[
C_P[a,b]
=
(p_a+p_b)S_{a,b}
-
S^{(2)}_{a,b}
-
p_ap_bN_{a,b}.
\]
After a linear scan over $X$ and $P$ to build these prefix quantities, all
pairwise costs are obtained in $O(p^2+d)$ time \mbox{and $O(p^2)$ space.}

The SMAWK acceleration from~\Cref{sec:smawk} continues to apply. The proof of~\Cref{lem:quadrangleM} only uses the ordering of the candidate values and the monotonicity of
the corresponding round-up and round-down sums; these arguments are unchanged
when the candidates are $p_1,\ldots,p_p$ rather than $x_1,\ldots,x_d$.
Likewise, the Hirschberg space reduction from~\Cref{sec:hirschberg} applies verbatim
after replacing $d$ DP indices by $p$ permissible-value indices. Therefore, the
general-$P$ optimal algorithm runs in
$
O(p^2\cdot s + d)
$
time and uses $O(p^2+d)$ space. When $P=X$ and the boundary values are fixed,
this reduces to the \mbox{formulation of~\Cref{lem:Hirschberg_opt}.}

\section{Details for Optimality Through Two-Way Time-Sharing}
\label{app:timesharing-details}

\subsection{From Lagrange Multipliers to Entropy Constraints}
\label{app:timesharing-lagrange}
The Lagrangian formulation need not find the optimal single quantizer for a
given entropy constraint. For example, Let $d$ be divisible by $2$, and consider
\[
\begin{aligned}
X&=\bigl(\underbrace{0,\ldots,0}_{d/2\text{ times}},
\underbrace{2,\ldots,2}_{d/2\text{ times}}\bigr),\\
P&=(0,1,2,3,4),\qquad s=2.
\end{aligned}
\]
The feasible quantizers are:
\[
\begin{array}{c|c|c|c}
\text{Quantizer $Q$} & \text{Output distribution} & H(\widehat X) & \mathit{vNMSE}\\
\hline
A=\{0,4\} & (3/4,\,1/4) & 0.811 & 1\\
B=\{0,3\} & (2/3,\,1/3) & 0.918 & 1/2\\
C=\{0,2\} & (1/2,\,1/2) & 1.000 & 0
\end{array}
\]
If $b=0.95$, then $B$ is the hard-constrained optimum. However, $B$ beats
$C$ only if
\[
\lambda \ge \frac{1/2}{1-0.918296}> 6,
\]
while $B$ beats $A$ only if
\[
\lambda \le \frac{1/2}{0.918296-0.811278}< 5.
\]
Thus, no value of $\lambda$ produces $B$ as \mbox{the Lagrangian optimum.}

The standard approach is to \emph{time-share} solutions that are optimal for
some value of $\lambda$. Here, the quantizer can pick a random subset of
$d\cdot\alpha$ entries, where
$\alpha=\frac{0.95-0.811}{1-0.811}\approx0.74$, encode them with $C$, and
encode the rest with $A$.\footnote{By using shared randomness to pick the
subset, the indices need not be communicated and the dequantizer can
\mbox{independently generate them.}} Since the entropy and vNMSE are linear in the
time-sharing weight, the result has entropy
$H(\widehat X)=0.811\cdot\alpha+1\cdot(1-\alpha)=0.95$ and vNMSE
$0\cdot\alpha+1\cdot(1-\alpha)\approx0.26 < 1/2$ , improving over 
\mbox{quantizer $B$.}

\subsection{Quantizing to Non-Adjacent Values in $Q$}
\label{app:timesharing-nonadjacent}
In the main development, each entry $x\in X$ is stochastically quantized
between
$a_x=\max\set{q\in Q\mid q\le x}$ and
$b_x=\min\set{q\in Q\mid q\ge x}$.
This intuitive restriction is not guaranteed to give the optimal single
quantizer. Let $d$ be divisible by $3$ and consider
\[
\begin{aligned}
X&=\bigl(\underbrace{0,\ldots,0}_{d/3\text{ times}},
\underbrace{1,\ldots,1}_{d/3\text{ times}},
\underbrace{2,\ldots,2}_{d/3\text{ times}}\bigr),\\
P&=(0,1,2),\qquad s=3,\qquad b=\frac32.
\end{aligned}
\]
Under the adjacent-rounding restriction, any feasible quantizer must contain
$0$ and $2$. Hence, the only relevant choices are
\[
Q=\set{0,2}
\qquad\text{and}\qquad
Q=\set{0,1,2}.
\]
The quantizer $Q=\set{0,1,2}$ maps all entries deterministically and yields
output frequencies
\[
\left(\frac13,\frac13,\frac13\right),
\]
so
\[
H(\widehat X)=\log_2 3>\frac32.
\]
Therefore, the entropy constraint forces the adjacent-rounding solution to use
\[
Q=\set{0,2}.
\]
This gives output frequencies $(1/2,1/2)$, so its entropy is $1\le b$. Its MSE
is $d/3$. Since
\[
\norm{X}^2=\frac d3\cdot1^2+\frac d3\cdot2^2=\frac{5d}{3},
\]
its vNMSE is
\[
\frac{d/3}{5d/3}=\frac15.
\]

If we instead allow $Q=\set{0,1,2}$ and permit different entries with the same value to be quantized differently, we can map half of the entries equal
to $1$ deterministically to $1$, and stochastically quantize the other half
between the non-adjacent values $\set{0,2}$, with probability $1/2$ on each
endpoint. This remains unbiased entrywise. The expected output frequencies are
\[
\left(\frac{5}{12},\frac16,\frac{5}{12}\right),
\]
and therefore
\[
H(\widehat X)
=
-2\cdot\frac{5}{12}\log_2\frac{5}{12}
-\frac16\log_2\frac16
\approx1.48336
<\frac32.
\]
The MSE is $d/6$, and hence the vNMSE is
\[
\frac{d/6}{5d/3}=\frac1{10}.
\]
Thus, allowing non-adjacent quantization and allowing two identical entries to
be treated differently can strictly improve the vNMSE while satisfying the
same entropy budget when a single quantizer \mbox{must be used.}

\subsection{Proof of Proposition~\ref{prop:adjacent-time-sharing-dominates}}
\label{app:timesharing-proof}
Under two-way time-sharing, allowing non-adjacent unbiased rounding does not
improve the entropy--vNMSE frontier, even when identical input entries may be
treated differently. We prove the stronger statement that every two-way
time-share using arbitrary unbiased rounding is dominated by a two-way
time-share using ordinary \mbox{adjacent stochastic quantization.}

Let $X=(x_1,\ldots,x_d)\in\mathbb{R}^d$, and assume $\norm{X}>0$. For a finite
quantizer $Q\subset\mathbb{R}$, an arbitrary unbiased rounding rule is
specified by probabilities
\[
K_i(q)=\Pr[\widehat{x}_i=q],
\qquad i\in[d],\ q\in Q,
\]
satisfying
\[
K_i(q)\ge 0,
\qquad
\sum_{q\in Q}K_i(q)=1,
\qquad
\sum_{q\in Q}qK_i(q)=x_i.
\]
The rule may depend on the coordinate $i$, so equal input values need not be
treated identically. Define its expected output-frequency vector by
\[
\pi_K(q)=\frac{1}{d}\sum_{i=1}^d K_i(q),
\qquad q\in Q.
\]
Thus, $\pi_K(q)$ is the expected fraction of coordinates emitted as $q$, and
$\sum_q\pi_K(q)=1$. Its entropy and vNMSE are
\[
R(K)=H(\pi_K)
=-\sum_{q\in Q}\pi_K(q)\log_2\pi_K(q)
\]and\[
D(K)=
\frac{1}{\norm{X}^2}
\sum_{i=1}^d\sum_{q\in Q}K_i(q)(q-x_i)^2,
\]where $0\log_2 0$ is defined as $0$.

A two-way time-share consists of rules $K_1,K_2$ for all $d$
coordinates and a weight $\alpha\in[0,1]$. The shared selector is independent
of the data values; equivalently, one may time-share by partitioning $X$ and using separate quantizers for each part. The quantizer used for a given entry need not be encoded, and the two parts may be
entropy-coded separately. This is the convexified time-sharing convention used
in the preceding discussion. Its rate and distortion are therefore
\begin{align*}
R_{\mathrm{ts}}
&=\alpha R(K_1)+(1-\alpha)R(K_2),\\
D_{\mathrm{ts}}
&=\alpha D(K_1)+(1-\alpha)D(K_2).
\end{align*}
Ordinary adjacent stochastic quantization with a sorted set
$S=\{s_1<\cdots<s_m\}$ maps each $x_i$ to the two consecutive values of $S$
that encompass it, with the unique probabilities that make the output mean
\mbox{equal to $x_i$.}

\adjacenttimesharingdominates*

The proof uses \mbox{three elementary facts.}

\begin{lemma}[The expected output frequencies determine the vNMSE]
\label{lem:timesharing-frequencies-determine-mse}
For every unbiased rule $K$ into $Q$,
\[
D(K)=
\frac{d\sum_{q\in Q}\pi_K(q)q^2-\norm{X}^2}{\norm{X}^2}.
\]Consequently, for fixed $X$ and fixed output values, two unbiased rules with
the same expected output-frequency vector have exactly the same vNMSE.
\end{lemma}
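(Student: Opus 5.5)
The plan is to expand the squared error around each coordinate's mean and use unbiasedness to kill the cross term. For a fixed coordinate $i$, write $(q-x_i)^2=q^2-2qx_i+x_i^2$. Summing against $K_i(q)$ and using the three defining constraints of an unbiased rule, $\sum_q K_i(q)=1$ and $\sum_q qK_i(q)=x_i$, gives
\[
\sum_{q\in Q}K_i(q)(q-x_i)^2=\sum_{q\in Q}K_i(q)q^2-2x_i^2+x_i^2=\sum_{q\in Q}K_i(q)q^2-x_i^2 .
\]
This is the per-coordinate variance identity $\mathrm{Var}[\widehat x_i]=\E[\widehat x_i^2]-(\E\widehat x_i)^2$. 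Adjacency and the coordinate-dependence of the rule play no role here.

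Next I will sum this over $i\in[d]$ and exchange the order of summation. The first term becomes $\sum_q q^2\sum_i K_i(q)=d\sum_q \pi_K(q)q^2$ by the definition of $\pi_K$. The second term becomes $\sum_i x_i^2=\norm{X}^2$. Dividing by $\norm{X}^2$, which is positive by the standing assumption, yields exactly the displayed formula for $D(K)$.

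The consequence then follows immediately. The right-hand side depends on $K$ only through the vector $\pi_K$, together with the fixed data $X$ and the fixed output values $Q$. Hence two unbiased rules with the same expected output-frequency vector have the same vNMSE.

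There is no real obstacle in this argument. The only point needing care is that unbiasedness is used coordinatewise, which is what makes the cross term $-2x_i\sum_q qK_i(q)$ collapse to $-2x_i^2$. The rest is just swapping finite sums.
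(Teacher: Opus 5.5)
Your proposal is correct and follows the paper's proof essentially line for line. Expanding the square and using $\sum_q K_i(q)=1$ and $\sum_q qK_i(q)=x_i$ gives $\sum_q K_i(q)q^2-x_i^2$ per coordinate; summing over $i$, exchanging the finite sums to obtain $d\sum_q\pi_K(q)q^2$, and dividing by $\norm{X}^2>0$ yields the formula, from which the consequence is immediate.
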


\begin{proof}
For each coordinate $i$, unbiasedness gives
\ifarxiv
\begin{align*}
\E[(\widehat{x}_i-x_i)^2]
&=\sum_{q\in Q}K_i(q)(q-x_i)^2\\
&=\sum_{q\in Q}K_i(q)q^2
  -2x_i\sum_{q\in Q}qK_i(q)
  +x_i^2\sum_{q\in Q}K_i(q)\\
&=\sum_{q\in Q}K_i(q)q^2-x_i^2.
\end{align*}
\else
\begin{align*}
\E[(\widehat{x}_i-x_i)^2]
&=\sum_{q\in Q}K_i(q)(q-x_i)^2\\
&=\sum_{q\in Q}K_i(q)q^2
  -2x_i\sum_{q\in Q}qK_i(q)\\
&\quad+x_i^2\sum_{q\in Q}K_i(q)\\
&=\sum_{q\in Q}K_i(q)q^2-x_i^2.
\end{align*}
\fi
Summing over $i$, exchanging the two finite sums, and dividing by
$\norm{X}^2$ yields
\begin{align*}
D(K)&=\frac{\sum_{q\in Q}q^2\sum_{i=1}^d K_i(q)-\sum_{i=1}^d x_i^2}{\norm{X}^2}\\&=\frac{
d\sum_{q\in Q}\pi_K(q)q^2-\norm{X}^2}{\norm{X}^2}.
\end{align*}
The right-hand side depends on $K$ only through $\pi_K$.
\end{proof}

\begin{lemma}[Every unbiased frequency vector is a mixture of adjacent-SQ
frequency vectors]\label{lem:timesharing-frequency-convex-hull}
Fix a feasible quantizer $Q=\{q_1<\cdots<q_m\}$, meaning
$q_1\le\min_i x_i$ and $q_m\ge\max_i x_i$. Let $\mathcal{F}_Q$ be the set of
expected output-frequency vectors produced by arbitrary unbiased rules into
$Q$. Let $\mathcal{A}_Q$ be the set of expected output-frequency vectors
produced by ordinary adjacent stochastic quantization using any feasible
subset $S\subseteq Q$. Then
\[
\mathcal{F}_Q=\operatorname{conv}(\mathcal{A}_Q).
\]
\end{lemma}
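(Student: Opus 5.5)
We prove the two inclusions separately. For $\operatorname{conv}(\mathcal{A}_Q)\subseteq\mathcal{F}_Q$, note that adjacent stochastic quantization with any feasible $S\subseteq Q$ is itself an unbiased rule into $Q$ (assigning zero probability to $Q\setminus S$), so $\mathcal{A}_Q\subseteq\mathcal{F}_Q$. The set $\mathcal{F}_Q$ is convex: given rules $K,K'$ and $\theta\in[0,1]$, the coordinatewise mixture $\theta K_i+(1-\theta)K'_i$ satisfies the three linear constraints (nonnegativity, normalization, mean $x_i$), and $\pi$ is linear in $K$. Hence $\operatorname{conv}(\mathcal{A}_Q)\subseteq\mathcal{F}_Q$.

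For the reverse inclusion, the key observation is that $K\mapsto\pi_K$ is linear. The set of unbiased rules is a product over $i$ of polytopes
\[
\mathcal{K}_i=\Bigl\{k\in\mathbb{R}_{\ge0}^Q:\ \textstyle\sum_q k(q)=1,\ \sum_q q\,k(q)=x_i\Bigr\}.
\]
Thus $\mathcal{F}_Q$ is the image of a polytope under a linear map. It therefore equals the convex hull of the images of the extreme points of the polytope. Equivalently, for any linear functional $w\in\mathbb{R}^Q$, $\max_{\pi\in\mathcal{F}_Q}\langle w,\pi\rangle$ is attained at some $\pi_{K}$ with $K$ in the image of a vertex. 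So it suffices to show that every maximizer of $\langle w,\pi\rangle$ over $\mathcal{F}_Q$ can be taken in $\mathcal{A}_Q$. Since $\mathcal{F}_Q$ is a compact convex set, it is then the convex hull of such exposed points.

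Fix $w$. Maximizing $\langle w,\pi_K\rangle=\frac1d\sum_i\sum_q w(q)K_i(q)$ separates across coordinates: each $i$ independently maximizes $\sum_q w(q)K_i(q)$ over $\mathcal{K}_i$. This value equals $\hat w(x_i)$, where $\hat w$ is the upper concave envelope of the points $\{(q,w(q))\}_{q\in Q}$ evaluated at $x_i$ (standard LP duality for distributions with a prescribed mean). Let $S\subseteq Q$ be the set of $q$ with $w(q)=\hat w(q)$, i.e. the points on the envelope. This set contains $q_1$ and $q_m$ because the envelope touches the extreme points, so $S$ is feasible. The envelope is piecewise linear, with breakpoints in $S$. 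Adjacent stochastic quantization of $x_i$ between its consecutive neighbors in $S$ therefore achieves exactly $\hat w(x_i)$, since it averages two envelope points lying on the same linear piece. Hence the adjacent-SQ rule with $S$ is a maximizer, so $\pi_{S}\in\mathcal{A}_Q$ attains $\max_{\mathcal{F}_Q}\langle w,\cdot\rangle$.

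To conclude, suppose some $\pi\in\mathcal{F}_Q$ were not in the compact convex set $\operatorname{conv}(\mathcal{A}_Q)$ (compact because $\mathcal{A}_Q$ is finite). A separating hyperplane gives a $w$ with $\langle w,\pi\rangle>\max_{a\in\mathcal{A}_Q}\langle w,a\rangle$, which contradicts the previous paragraph.

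The main obstacle is the envelope step: verifying that the per-coordinate LP optimum equals the concave envelope, and that a two-point adjacent split in $S$ attains it. This needs care when $x_i$ is itself in $S$, where the rule is deterministic, and when ties make $S$ include collinear points. Including all tied points in $S$ is harmless, because adjacent rounding within a single linear piece still attains the envelope value.
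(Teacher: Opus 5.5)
Your proposal is correct and takes essentially the same route as the paper: compare linear functionals over the two sets, split the optimization across coordinates, use an envelope-plus-Jensen bound with adjacent SQ on the envelope's support points attaining it, and finish by strict separation. The differences are cosmetic. You maximize against the upper concave envelope and keep every envelope-touching point in $S$, whereas the paper minimizes against the lower convex envelope using only the vertices. You also cite the separating hyperplane theorem where the paper proves it with a nearest-point projection, and you leave the trivial $m=1$ case implicit.
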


\begin{proof}
If $m=1$, feasibility implies that $x_i=q_1$ for every $i$, and both
$\mathcal{F}_Q$ and $\mathcal{A}_Q$ contain the same single frequency vector.
Assume henceforth that $m\ge2$.

Every adjacent-SQ rule is an unbiased rule into $Q$, after assigning zero
probability to values in $Q\setminus S$. Hence,
$\operatorname{conv}(\mathcal{A}_Q)\subseteq\mathcal{F}_Q$, because
\mbox{$\mathcal{F}_Q$ is convex.}

For the reverse inclusion, fix arbitrary real numbers $c(q)$ for $q\in Q$.
We first minimize the linear quantity
\[
\langle c,\pi_K\rangle
=\sum_{q\in Q}c(q)\pi_K(q)
=\frac{1}{d}\sum_{i=1}^d\sum_{q\in Q}c(q)K_i(q)
\]
over all unbiased rules $K$. The constraints on different coordinates are
independent, so this minimization \mbox{separates over $i$.}

Let $g$ be the lower convex envelope of the finitely many points
\[
(q_1,c(q_1)),\ldots,(q_m,c(q_m)).
\]
Equivalently, $g$ is the largest convex function on $[q_1,q_m]$ satisfying
$g(q)\le c(q)$ for every $q\in Q$. For any distribution $p$ on $Q$ with mean
$x$, Jensen's inequality gives
\[
g(x)
=g\!\left(\sum_{q\in Q}p(q)q\right)
\le\sum_{q\in Q}p(q)g(q)
\le\sum_{q\in Q}p(q)c(q).
\]
Thus, $g(x)$ is a lower bound on the cost of every unbiased distribution \mbox{with
mean $x$.}

Let $S=\{v_1<\cdots<v_k\}\subseteq Q$ be the $q$-coordinates of the vertices
on the lower convex envelope. The endpoints $q_1,q_m$ belong to $S$, so $S$ is
feasible. On every interval $[v_j,v_{j+1}]$, the function $g$ is the line
segment joining $(v_j,c(v_j))$ and $(v_{j+1},c(v_{j+1}))$. Therefore, if
$v_j\le x\le v_{j+1}$, the distribution
\begin{align*}
\Pr[Y=v_j]
&=\frac{v_{j+1}-x}{v_{j+1}-v_j},\\
\Pr[Y=v_{j+1}]
&=\frac{x-v_j}{v_{j+1}-v_j}.
\end{align*}
has mean $x$ and expected cost exactly $g(x)$. This is precisely ordinary
adjacent stochastic quantization with the set $S$. Applying it to every
coordinate attains the minimum of $\langle c,\pi_K\rangle$. Hence, for every
$c$,
\begin{equation}
\min_{\pi\in\mathcal{F}_Q}\langle c,\pi\rangle
=\min_{a\in\mathcal{A}_Q}\langle c,a\rangle.
\label{eq:timesharing-linear-minima}
\end{equation}

It remains to show that~\eqref{eq:timesharing-linear-minima} implies
$\mathcal{F}_Q\subseteq\operatorname{conv}(\mathcal{A}_Q)$. Suppose instead
that some $\pi\in\mathcal{F}_Q$ lies outside the compact convex set
$C=\operatorname{conv}(\mathcal{A}_Q)$. Choose $y\in C$ minimizing
$\norm{\pi-y}$ and set $c=y-\pi$. For every $a\in C$, convexity of $C$ implies
that $y+t(a-y)\in C$ for $t\in[0,1]$. Since $t=0$ minimizes
\[
\norm{\pi-(y+t(a-y))}^2,
\]
the right derivative at $0$ is nonnegative, which gives
\[
(y-\pi)\cdot(a-y)\ge0.
\]
Thus, $c\cdot a\ge c\cdot y$ for every $a\in C$, while
\[
c\cdot\pi=c\cdot y-\norm{c}^2<c\cdot y.
\]
Therefore,
\begin{align*}
\min_{f\in\mathcal{F}_Q}c\cdot f
&\le c\cdot\pi
<\min_{a\in C}c\cdot a\\
&=\min_{a\in\mathcal{A}_Q}c\cdot a,
\end{align*}
contradicting~\eqref{eq:timesharing-linear-minima}. Hence,
$\mathcal{F}_Q=\operatorname{conv}(\mathcal{A}_Q)$.
\end{proof}

\begin{lemma}[A rate-constrained mixture needs at most two quantizers]
\label{lem:timesharing-two-quantizers-suffice}
Let $(R_j,D_j)$, $j=1,\ldots,N$, be finitely many rate--vNMSE pairs. If some
mixture of these pairs has rate at most $b$, then the minimum vNMSE among all
mixtures with rate at most $b$ is attained by a mixture supported on at most two pairs.
\end{lemma}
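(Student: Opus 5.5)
This is a finite-dimensional linear program over mixture weights, and the plan is to argue via basic feasible solutions; a direct geometric argument in the plane is a fallback. Mixtures of the pairs are weight vectors $w\in\mathbb{R}^N$ with $w_j\ge 0$ and $\sum_j w_j=1$. The rate and vNMSE of the mixture are $\sum_j w_jR_j$ and $\sum_j w_jD_j$. The problem is then
\[
\min_{w}\ \sum_j w_jD_j \quad\text{s.t.}\quad \sum_j w_jR_j\le b,\ \sum_j w_j=1,\ w\ge 0 .
\]
By hypothesis the feasible set is nonempty. It is a closed subset of the simplex, hence a nonempty compact polytope. A linear objective therefore attains its minimum on it, and the minimum is attained at a vertex.

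First, introduce a slack variable $\sigma\ge 0$ to write the rate constraint as an equality, $\sum_j w_jR_j+\sigma=b$. The program is now in standard form with two equality constraints and nonnegative variables $(w,\sigma)$. The optimum is attained at a basic feasible solution, and such a solution has at most two nonzero coordinates among $(w_1,\ldots,w_N,\sigma)$. Consequently at most two of the $w_j$ are positive, which is exactly a mixture supported on at most two pairs. The slack costs nothing: if $\sigma$ is basic, then only one $w_j$ is positive and that $w_j=1$, which is a (degenerate) mixture of one pair. No degeneracy issues need separate handling, because the fundamental theorem of LP states that an optimal basic feasible solution exists whenever an optimum exists.

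Alternatively, give a self-contained geometric argument. The set of achievable (rate, vNMSE) points is the convex hull $\mathcal{C}$ of the $N$ planar points. The optimal value is $\min\{D:(R,D)\in\mathcal{C},\ R\le b\}$, the minimum of a linear function over the polygon $\mathcal{C}\cap\{R\le b\}$. It is attained at a vertex of that polygon, and every such vertex is either a vertex of $\mathcal{C}$ or the intersection of the line $R=b$ with an edge of $\mathcal{C}$. In both cases the vertex is a convex combination of at most two of the original points. One could also invoke Carathéodory in the plane, but that gives three points; the two-point bound comes from the optimum lying on the boundary.

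The main obstacle is only bookkeeping: making sure the optimum is attained (compactness and nonemptiness from the hypothesis) and that degenerate cases, where the optimum uses a single pair or $b$ exceeds every rate, still fall under ``at most two''. The LP basic-solution argument handles both cleanly, so I will present that version and mention the geometric picture as intuition.
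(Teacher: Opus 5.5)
Your proof is correct and takes the same route as the paper. Both treat the problem as a linear program over mixture weights, take an optimal extreme point (equivalently, a basic feasible solution), and use the fact that there are only two constraints to conclude that at most two weights are positive. The one difference is that you add a slack variable and cite the fundamental theorem of linear programming, whereas the paper proves the sparsity directly: three positive weights would make the vectors $(1,R_j)\in\mathbb{R}^2$ linearly dependent, which gives a perturbation preserving both total mass and rate and so contradicts extremality.
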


\begin{proof}
Consider the finite-dimensional linear program
\[
\min_{w_1,\ldots,w_N}\sum_{j=1}^N w_jD_j
\]
subject to
\begin{equation}
\sum_{j=1}^N w_j=1,
\qquad
\sum_{j=1}^N w_jR_j\le b,
\qquad
w_j\ge0.
\label{eq:timesharing-lp}
\end{equation}
Its feasible set is compact, so an optimum exists. An optimum may be chosen
to be an extreme point of the feasible set. Indeed, if an optimal point is a
nontrivial convex combination of two feasible points, linearity implies that
both points are also optimal; repeating within the resulting
lower-dimensional face reaches an \mbox{extreme optimal point.}

Let $w$ be an extreme feasible point, and suppose that it has at least three
positive coordinates. The vectors $(1,R_j)\in\mathbb{R}^2$ corresponding to
those positive coordinates are linearly dependent. Hence, there is a nonzero
vector $z$, supported only on those coordinates, such that
\[
\sum_jz_j=0,
\qquad
\sum_jz_jR_j=0.
\]
For sufficiently small $\varepsilon>0$, both $w+\varepsilon z$ and
$w-\varepsilon z$ are nonnegative. They preserve both sums
in~\eqref{eq:timesharing-lp}, so they are distinct feasible points, and
\[
w=\frac12(w+\varepsilon z)+\frac12(w-\varepsilon z).
\]
This contradicts extremality. Therefore, an extreme feasible point, and in
particular an extreme optimum, has at most \mbox{two positive coordinates.}
\end{proof}

\begin{proof}[Proof of Proposition~\ref{prop:adjacent-time-sharing-dominates}]
The existence of an unbiased rule $K_t$ into $Q_t$ implies that every
$x_i$ lies in the convex hull of $Q_t$; hence $Q_t$ is feasible. Apply Lemma~\ref{lem:timesharing-frequency-convex-hull} separately to $K_1$ and
$K_2$. For each $t\in\{1,2\}$, there are feasible subsets $S_{t,r}\subseteq Q_t$ and weights $\theta_{t,r}\ge0$ with
$\sum_r\theta_{t,r}=1$ such that
\begin{equation}
\pi_{K_t}=\sum_r\theta_{t,r}\pi_{S_{t,r}}.
\label{eq:timesharing-frequency-decomposition}
\end{equation}
By Lemma~\ref{lem:timesharing-frequencies-determine-mse} and the linearity of
its right-hand side in $\pi$,
\begin{equation}
D(K_t)=\sum_r\theta_{t,r}D(S_{t,r}).
\label{eq:timesharing-distortion-decomposition}
\end{equation}
Entropy is concave. To see this directly, the function
$h(u)=-u\log_2u$, with $h(0)=0$, satisfies
$h''(u)=-1/(u\ln2)<0$ for $u>0$. Therefore,
Equation~\eqref{eq:timesharing-frequency-decomposition} gives
\begin{equation}
R(K_t)=H(\pi_{K_t})
\ge\sum_r\theta_{t,r}H(\pi_{S_{t,r}})
=\sum_r\theta_{t,r}R(S_{t,r}).
\label{eq:timesharing-entropy-decomposition}
\end{equation}

Combining the decompositions for the two original arms with weights
\[
w_{1,r}=\alpha\theta_{1,r},
\qquad
w_{2,r}=(1-\alpha)\theta_{2,r}
\]
produces a finite mixture of adjacent-SQ quantizers whose MSE equals the
original two-way vNMSE by
Equation~\eqref{eq:timesharing-distortion-decomposition}, and whose rate is no
larger than the original two-way rate by
Equation~\eqref{eq:timesharing-entropy-decomposition}. Every quantizer in this
finite collection is a subset of $Q_1$ or $Q_2$, so it has at \mbox{most $s$ values.}

This finite mixture is only an intermediate existence certificate. Apply
Lemma~\ref{lem:timesharing-two-quantizers-suffice} to its finitely many
adjacent-SQ rate--vNMSE pairs, using the original rate $R_{\mathrm{ts}}$ as the
budget. The resulting optimal mixture uses at most two adjacent-SQ quantizers,
has rate at most $R_{\mathrm{ts}}$, and has vNMSE no larger than the intermediate
mixture, hence no larger than $D_{\mathrm{ts}}$. This \mbox{proves the proposition.}
\end{proof}

Consequently, there is no instance in this model where a
two-way time-share using non-adjacent unbiased rounding has strictly smaller
vNMSE than every adjacent-only solution that may time-share between at \mbox{most two
quantizers.}

For the example above, adjacent time-sharing between $\{0,1,2\}$ and
$\{0,2\}$ is already better than the proposed non-adjacent rule. Put weight
\[
\alpha=\frac{3/2-1}{\log_2 3-1}\approx0.855
\]
on $\{0,1,2\}$ and the remaining weight on $\{0,2\}$. The average entropy is
exactly $3/2$, while the vNMSE is
\[
(1-\alpha)\frac15\approx0.029<\frac1{10}.
\]
Thus, the existing example does not separate non-adjacent rounding from
adjacent rounding once two-way \mbox{time-sharing is admitted.}

\ifarxiv\else\onecolumn\fi
\section{Evaluation Details}\label{app:evaluation}

\setcounter{topnumber}{4}
\setcounter{bottomnumber}{4}
\setcounter{totalnumber}{8}
\setcounter{dbltopnumber}{4}
\renewcommand{\topfraction}{0.95}
\renewcommand{\bottomfraction}{0.95}
\renewcommand{\textfraction}{0.05}
\renewcommand{\floatpagefraction}{0.75}
\renewcommand{\dbltopfraction}{0.95}
\renewcommand{\dblfloatpagefraction}{0.75}

\newcommand{\evalfigurestart}{%
\begin{figure}[!htbp]}
\newcommand{\evalfigureend}{%
\end{figure}}

\evalfigurestart
    \centering
    \includegraphics[width=0.96\textwidth,height=0.78\textheight,
      keepaspectratio]{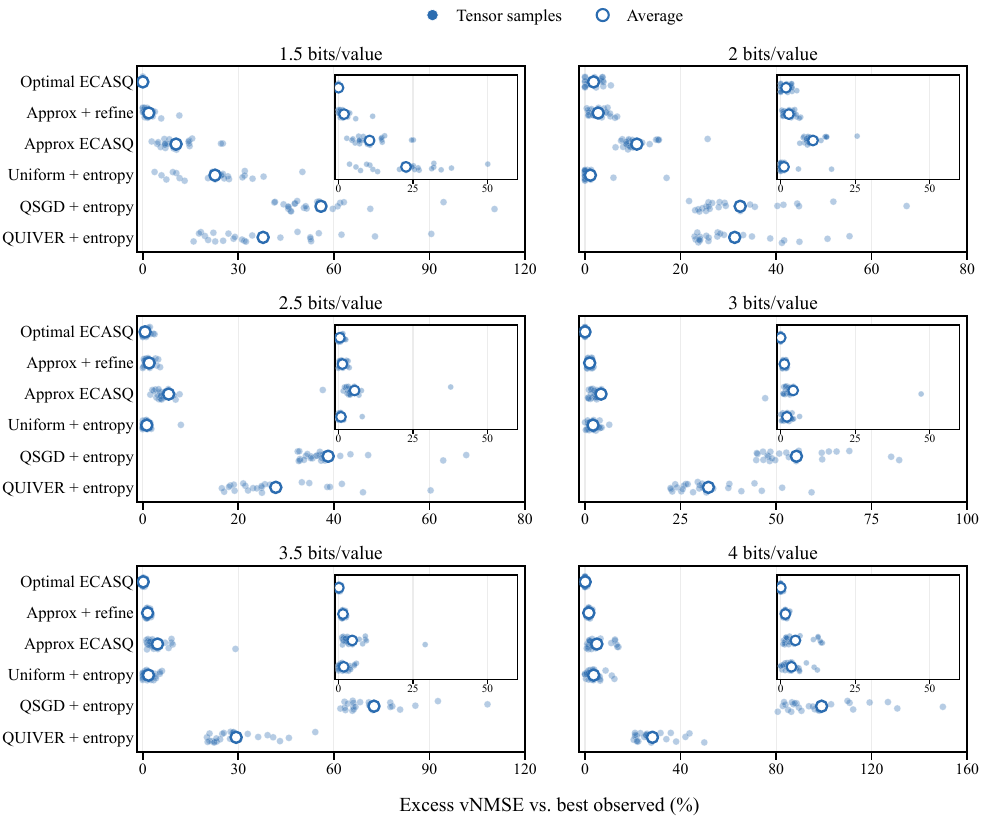}
    \caption{Matched-rate vNMSE over 25 BF16 real-model weight tensors.  Subfigures
    show $1.5$, $2$, $2.5$, $3$, $3.5$, and $4$ bits/value in reading order.
    Filled circles are individual tensors and the larger white-centered circle
    is their arithmetic mean.  Every full subfigure uses a linear horizontal axis
    sized to its observed range; its inset magnifies Optimal ECASQ, Approx ECASQ
    with refinement, Approx ECASQ, and uniform quantization with entropy coding
    on a common linear scale within this figure.  Values are relative to the best
    observed method for the same tensor and total rate, and no method is
    extrapolated.}
    \label{fig:eval-real-weights}
\evalfigureend

\evalfigurestart
    \centering
    \includegraphics[width=0.96\textwidth,height=0.78\textheight,
      keepaspectratio]{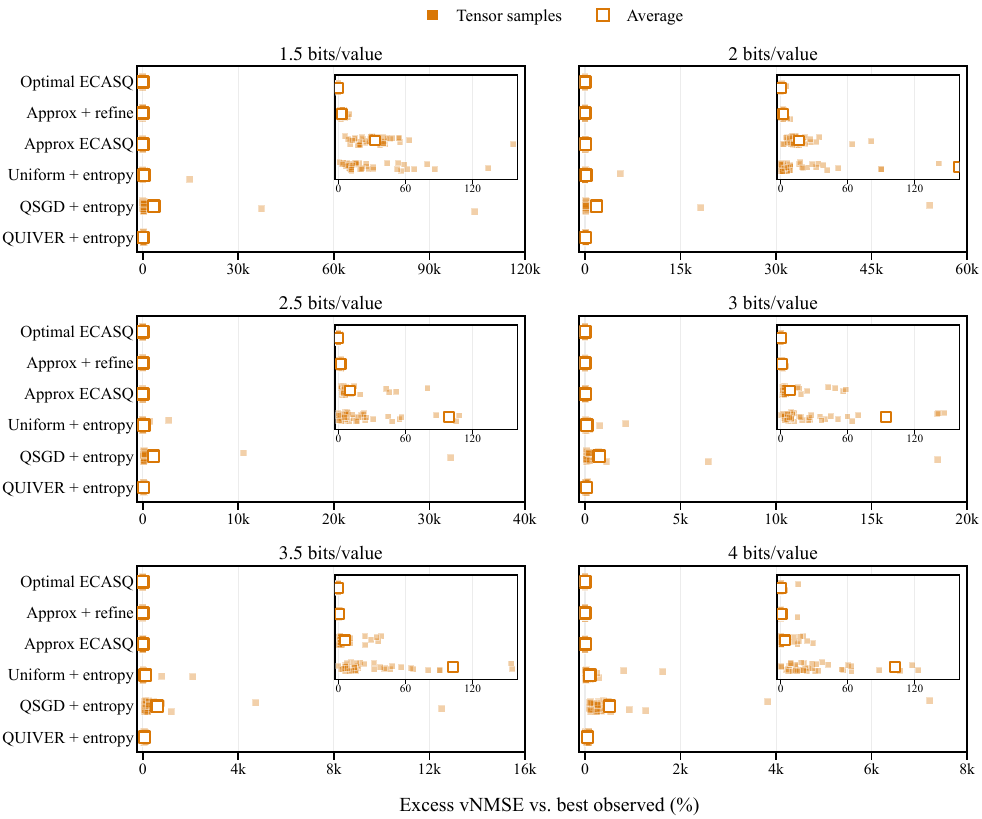}
    \caption{Matched-rate vNMSE over 45 BF16 real-model activation tensors.
    Subfigure order and axis conventions match \Cref{fig:eval-real-weights}.
    Filled squares are individual tensors and the larger white-centered square
    is their arithmetic mean.  Each inset uses a common linear scale within this
    figure and magnifies the three ECASQ variants together with uniform
    quantization with entropy coding.}
    \label{fig:eval-real-activations}
\evalfigureend

\evalfigurestart
    \centering
    \includegraphics[width=0.96\textwidth,height=0.78\textheight,
      keepaspectratio]{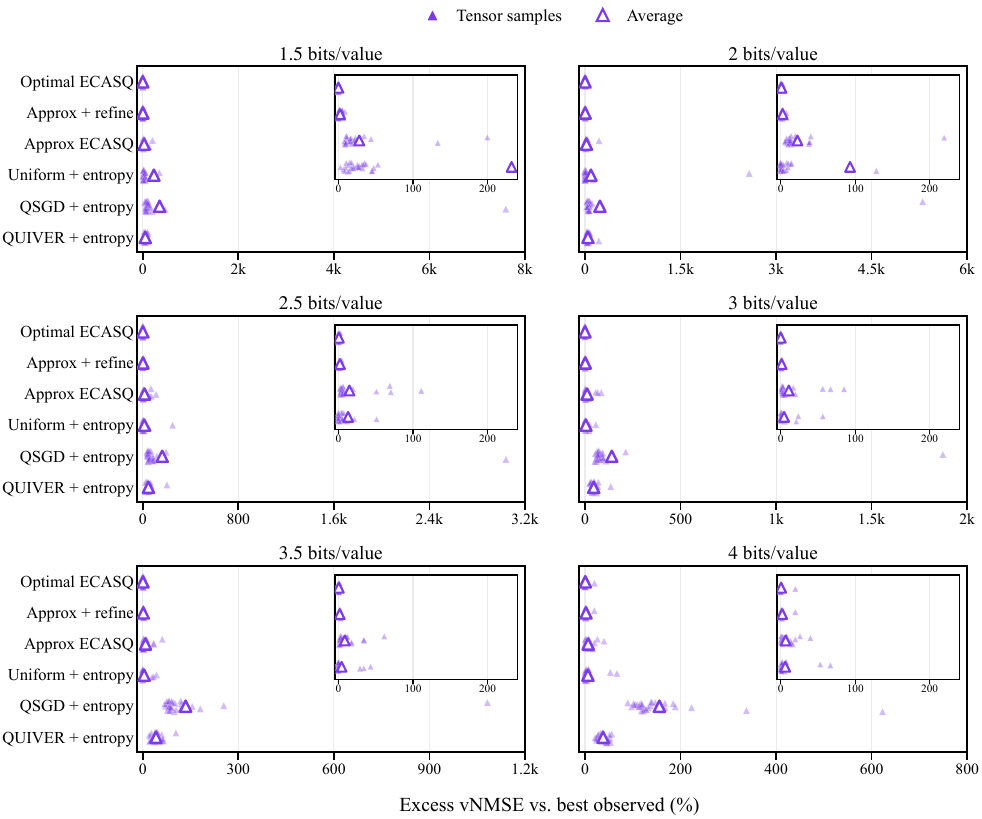}
    \caption{Matched-rate vNMSE over 30 BF16 real-model KV-cache tensors.
    Subfigure order and axis conventions match \Cref{fig:eval-real-weights}.
    Filled triangles are individual tensors and the larger white-centered
    triangle is their arithmetic mean.  Each inset uses a common linear scale
    within this figure and magnifies the three ECASQ variants together with
    uniform quantization with entropy coding.}
    \label{fig:eval-real-kv-cache}
\evalfigureend

\evalfigurestart
    \centering
    \includegraphics[width=0.98\textwidth]
      {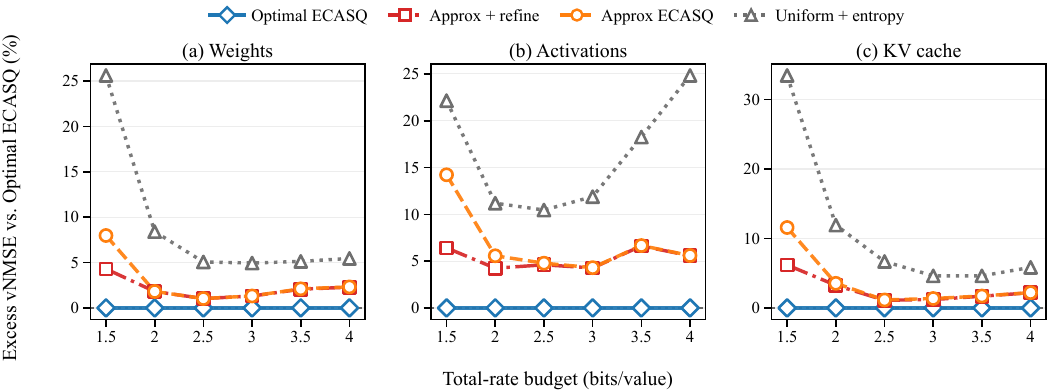}
    \caption{Matched-rate comparison on paired uniform permissible grids.
    The subfigures show weights, activations, and KV-cache tensors.  Curves report
    the arithmetic mean of each tensor's excess vNMSE relative to Optimal
    ECASQ at the same total-rate budget.  Approx ECASQ with refinement stays
    close to Optimal ECASQ even when all methods use the stored grids selected
    by the uniform baseline; lower is better.}
    \label{fig:eval-uniform-grid-control}
\evalfigureend

\subsection{Protocol and Rate Accounting}
All experiments use BF16 inputs and report the scale-free vNMSE defined in
\Cref{sec:preliminaries}.  To compare methods at a common storage budget, we include
both the ideal entropy-coded payload and the per-level metadata:
\[
R_{\mathrm{tot}}(Q,X)
  = H(\widehat X) + \frac{(16+16)\lvert Q\rvert}{d}
  = H(\widehat X) + \frac{32\lvert Q\rvert}{d}
  \quad\text{bits/value}.
\]
For each active quantization value, the first 16 bits store its BF16 reconstruction value
and the second 16 bits model its entropy descriptor, such as a quantized
frequency/CDF or code-length entry.  The ordering of $Q$ supplies the
symbol-to-value association, so we do not charge for a separate permutation.
This remains an ideal rate: $H(\widehat X)$ is the ideal payload rate, and we
exclude global framing and finite-stream redundancy.  At a target rate,
we average repeated rates arithmetically and vNMSEs geometrically, interpolate
vNMSE in log space between adjacent measured points, and never extrapolate.
Every reported method--tensor curve brackets all \mbox{six target rates.}

We compare Optimal ECASQ, Approx ECASQ, Approx ECASQ followed by five refinement
rounds, QUIVER with entropy coding, QSGD with entropy coding, and uniform
stochastic quantization with entropy coding.  The synthetic rate--distortion
study uses vectors of length $d=16{,}384$, five seeds, at most $s=64$ quantization values,
and Normal, Lognormal, Student-$t$, and sparse-outlier distributions.  We sweep
25 logarithmically spaced Lagrange multipliers from $10^{-6}$ through $10^2$
for ECASQ.  QUIVER uses requested codebook sizes from 2 through 256 in powers
of two; QSGD and uniform quantization continue through 4096.  We then
compare at \mbox{$R_{\mathrm{tot}}\in\{1.5,2,2.5,3,3.5,4\}$ bits/value.}

The real-model study contains 100 fixed samples from five model
families: Qwen2.5 0.5B and 1.5B, Gemma 3 1B IT, Llama 3.2 1B, and Phi-3.5 Mini.
It comprises 25 weight matrices, 45 activations, and 30 KV-cache tensors.  Each
sample contains $d=16{,}384$ entries.  Activations and KV caches come from the
first, middle, and last decoder blocks; the weights cover embedding,
attention-input, attention-output, MLP-input, and MLP-output families.  Optimal
ECASQ and Approx ECASQ ordinarily use $s=64$ and
$\lambda\in\{10^{-6},10^{-5},\ldots,10^2\}$; the baseline codebook-size grids
match the synthetic study.  To bracket every target without extrapolation, we
evaluate Optimal ECASQ on all 25 weight samples, add an $s=128$, $\lambda=0$
endpoint for one high-rate activation curve, extend fixed-codebook baselines
through 16,384 quantization values where needed, and use a 65,536-bin QUIVER approximation
grid for three extreme-range activations.  Each activation/KV calibration run
uses four 128-token sequences.  The plots use the lowest
measured/interpolated vNMSE for each tensor as a neutral ``best observed''
reference.  The deterministic real-model samples do not support \mbox{seed-wise
confidence intervals.}

\subsection{Matched-Rate vNMSE}
\Cref{fig:eval-real-weights,fig:eval-real-activations,fig:eval-real-kv-cache}
give the complementary real-model study at every target rate, with all 100
tensors present for every method.  Refined Approx ECASQ remains close to the
best observed method.  At 3 bits/value its average and maximum excess vNMSE are
$1.35\%$ and $4.8\%$, respectively.  Unrefined Approx ECASQ averages
$8.12\%$, lower than uniform, QUIVER, and QSGD at $44.35\%$, $54.9\%$, and
\mbox{$390.57\%$, respectively.}

\subsection{Paired Uniform-Grid Control}
\label{app:eval-uniform-grid-control}
To isolate the benefit of selecting $Q$ from a common permissible alphabet,
we first run uniform quantization on each real-model tensor.  At each target
rate, we retain the two measured uniform min--max grids $P$ that bracket the
target, then run all three ECASQ variants on those exact stored grids with
$s_{\max}=\lvert P\rvert$.  We convexify in
$(R_{\mathrm{tot}},\vnmse)$ space using linear-vNMSE time sharing; every
component comparison therefore has the same permissible grid and maximum
number of quantization values, and the rate includes the 32-bit per-level
metadata overhead.

Optimal ECASQ is limited to grids with at most 256 stored quantization values.  Applying
this limit as a single all-rate cohort leaves 25 weight tensors,
30 activation tensors, and 29 KV-cache tensors.  We omit the remaining
15 activation tensors and one KV-cache tensor from every method and target rather
than report partial coverage.

Across the 18 tensor-category--rate cells in
\Cref{fig:eval-uniform-grid-control}, refined Approx ECASQ, unrefined Approx
ECASQ, and uniform quantization have mean excess vNMSEs of $3.35\%$,
$4.41\%$, and $12.25\%$, respectively.  Refinement is within $6.62\%$ of
Optimal ECASQ in every cell, while the uniform baseline reaches
\mbox{$33.46\%$ excess.}

The synthetic study provides a controlled comparison in which Optimal ECASQ is
available throughout.  The main-paper \Cref{fig:eval-vnmse} selects the fully
covered 2, 3, and 4 bits/value targets.  Across those 12 distribution--rate
points, five refinement rounds reduce mean excess vNMSE from $4.97\%$ to
$1.37\%$, with a $4.06\%$ maximum on the five-seed aggregate curves.
Unrefined Approx ECASQ beats QSGD and QUIVER at all 12 points and uniform
quantization at 11 of 12.  Across all four distributions and six target rates,
refinement reduces the mean excess from $6.06\%$ to $1.4\%$ and the maximum
from $23.83\%$ to $4.06\%$ relative to Optimal ECASQ.
\Cref{fig:eval-synthetic-summary} shows all six matched-rate comparisons.  Ten
high-cardinality QUIVER runs failed; the no-extrapolation rule omits only
unsupported target points rather than \mbox{extending incomplete curves.}

\evalfigurestart
    \centering
    \includegraphics[width=0.96\textwidth,height=0.88\textheight,
    keepaspectratio]{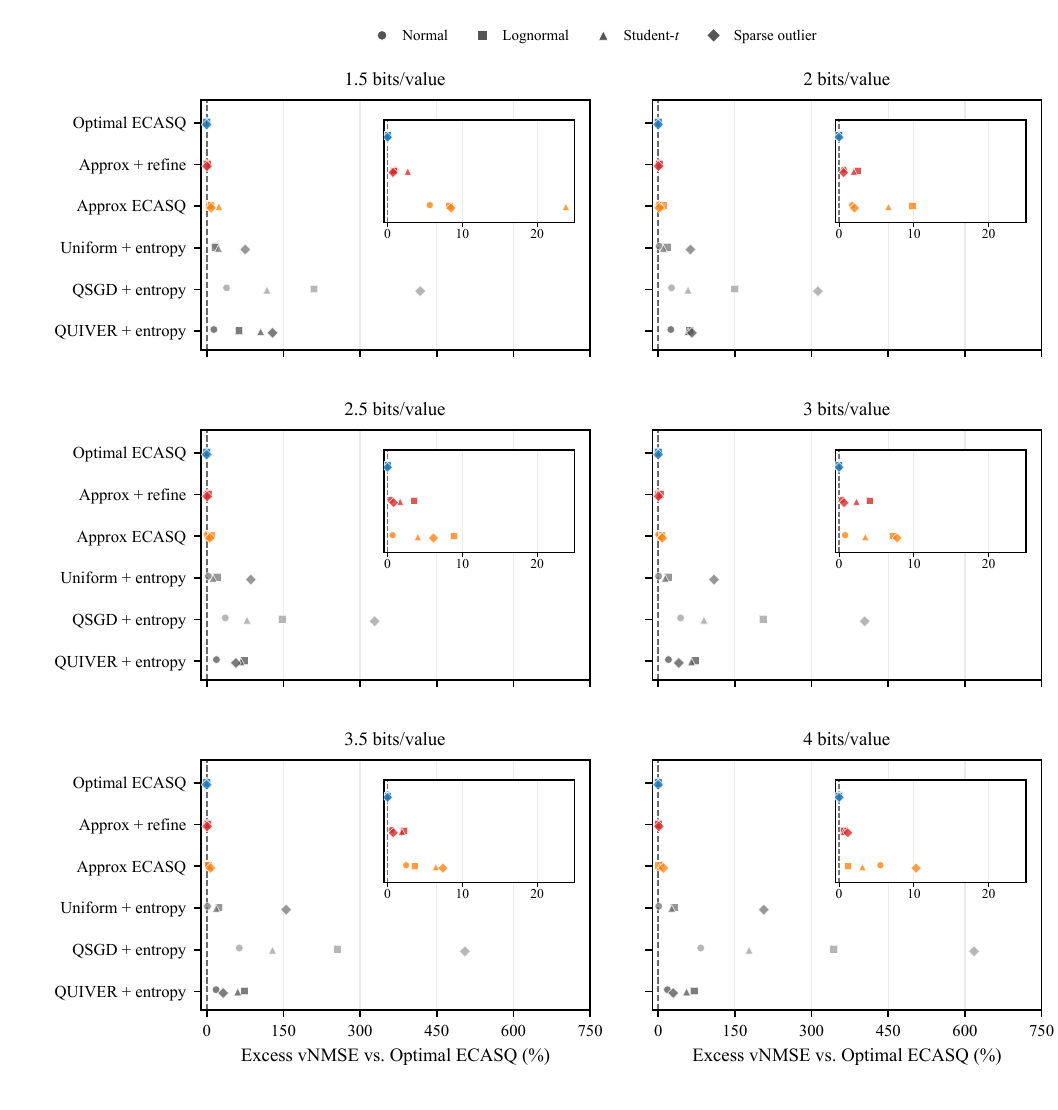}
    \caption{Synthetic matched-rate comparison at six total rates, using the
    same method colors and distribution markers as \Cref{fig:eval-vnmse}.
    Circles, squares, triangles, and diamonds denote Normal, Lognormal,
    Student-$t$, and sparse-outlier distributions, respectively. Insets magnify
    the first three rows on a shared $0$--$25\%$ scale.  Total rate includes output entropy and
    the 32-bit per-level metadata.  Smaller values nearer zero are better.}
    \label{fig:eval-synthetic-summary}
\evalfigureend

\subsection{Approximation and Refinement Reality vs. Guarantee}
\Cref{fig:eval-approx-refine} isolates the entropy-surrogate effect.  For an
Approx ECASQ codebook, replacing the optimized surrogate
$H(I,B)$ with the exact output entropy $H(\widehat X)$ preserves vNMSE and saves
exactly $H(B\mid \widehat X)$ bits/value; subfigure (a) shows how this saving
varies with the Approx surrogate budget $b$.  All 100 interpolated
distribution/seed/budget checks satisfy the empirical one-bit guarantee, and
$H(B\mid \widehat X)$ remains within the \mbox{proved one-bit bound.}

\Cref{fig:eval-refinement} reports refinement separately.  Starting from
Approx ECASQ, five rounds reduce the median objective gap to
$0.12\%$ (maximum $0.58\%$ over the 20 distribution/seed cases), with most
of the improvement occurring in the first few rounds.  Runtime grows only
slightly from 10 to 20 requested rounds because the round count is an upper
bound: refinement terminates as soon as a complete round fails to decrease the
objective, so most runs have converged before reaching 20.  The plotted timing
summary takes the running maximum for each distribution/seed trajectory before
computing the median, making the end-to-end curve monotone while suppressing
small measurement fluctuations \mbox{between independent runs.}

\evalfigurestart
    \centering
    \begin{minipage}[t]{0.48992\textwidth}
      \centering
      \includegraphics[width=\linewidth]{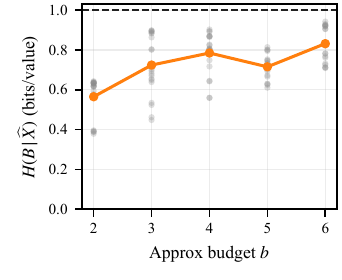}
      \par\smallskip{\small (a) Exact-entropy savings by Approx budget.}
    \end{minipage}
    \begin{minipage}[t]{0.48992\textwidth}
      \centering
      \includegraphics[width=\linewidth]{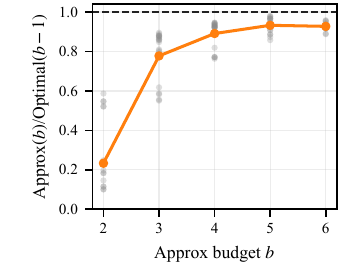}
      \par\smallskip{\small (b) Empirical one-bit guarantee.}
    \end{minipage}
    \caption{Approximation diagnostics over the 20 distribution--seed cases at
    each budget. Gray points show individual cases. Orange circles show the
    arithmetic mean in subfigure (a) and the geometric mean in subfigure (b). The
    dashed horizontal line marks the proved upper bound of one. Subfigure (a)
    reports the bits saved by exact output-entropy accounting for the same
    Approx ECASQ codebook selected under surrogate budget $b$, at unchanged
    vNMSE:
    $H(I,B)-H(\widehat X)=H(B\mid\widehat X)$. Subfigure (b) reports the vNMSE ratio
    between the surrogate-feasible Approx solution at budget $b$ and Optimal
    ECASQ at budget $b-1$. Both subfigures linearly interpolate the lower convex
    envelopes of the measured Lagrangian sweeps, corresponding to whole-vector
    time sharing between neighboring retained frontier points.}
    \label{fig:eval-approx-refine}
\evalfigureend

\evalfigurestart
    \centering
    \begin{minipage}[t]{0.49\textwidth}
      \centering
      \includegraphics[width=\linewidth]{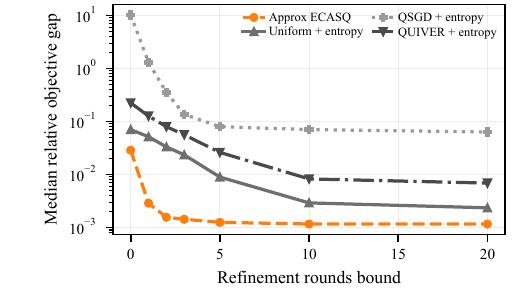}
      \par\smallskip{\small (a) Convergence to Optimal ECASQ.}
    \end{minipage}\hfill
    \begin{minipage}[t]{0.49\textwidth}
      \centering
      \includegraphics[width=\linewidth]{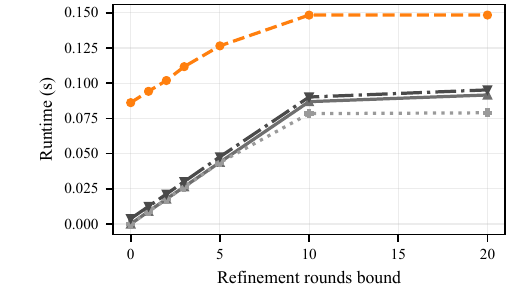}
      \par\smallskip{\small (b) End-to-end quantizer time.}
    \end{minipage}
    \caption{Refinement from four initializations.  Subfigure (a) shows the median
    objective gap to Optimal ECASQ.  Subfigure (b) shows median end-to-end runtime,
    including initialization and all executed refinement rounds; requested
    rounds are an upper bound because refinement stops after a round without
    objective improvement.  To reflect nondecreasing work despite run-to-run
    timing noise, each trajectory uses its running-maximum time before
    aggregation.}
    \label{fig:eval-refinement}
\evalfigureend

\subsection{Runtime and Objective Scaling}
We measure the cached Approx implementation on a 12-core Apple M2 Max with
32 GB of memory, using macOS 15.7.2 (arm64), Python 3.9.6, PyTorch 2.8.0, and
eight CPU threads.  
histogram construction and method-specific
setup, optimization, and refinement; it excludes random-vector generation,
metric evaluation, and the shared final codebook cast.  Each appendix point is
the mean over five independently generated vectors, with a two-sided 95\%
Student-$t$ confidence interval.  Optimal ECASQ and the two approximate variants
receive the same \mbox{weighted BF16 histogram.}

The main-paper runtime result in \Cref{fig:eval-runtime} aggregates the four
distribution-specific means; its error bars show their min--max range rather
than treating the distributions as exchangeable samples.  At $d=262{,}144$
and $s=64$, the pooled mean runtimes
are $6.39$ s for Optimal ECASQ, $0.26$ s for Approx, and $0.33$ s for refined Approx,
corresponding to $24.4\times$ and $19.2\times$ speedups.  Across the four
distributions, the respective speedup ranges are $13.4$--$34.3\times$ and
$8.7$--$28.3\times$.  These timing experiments hold $\lambda=0.1$ fixed.  The
lower subfigures therefore report the corresponding Lagrangian objective
$\vnmse(Q,X)+\lambda H(\widehat X)$ rather than a matched-rate distortion.  The
fast cached implementation is distinct from the streamed, linear-space
variant analyzed in the main text.  \Cref{fig:eval-runtime-s16} reports the
$s=16$ codebook-size setting, \Cref{fig:eval-runtime-s64} gives the full
per-distribution scaling curves for the main-paper setting, and
\Cref{fig:eval-runtime-s256} reports \mbox{the $s=256$ setting.}

\evalfigurestart
    \centering
    \includegraphics[width=0.98\textwidth]{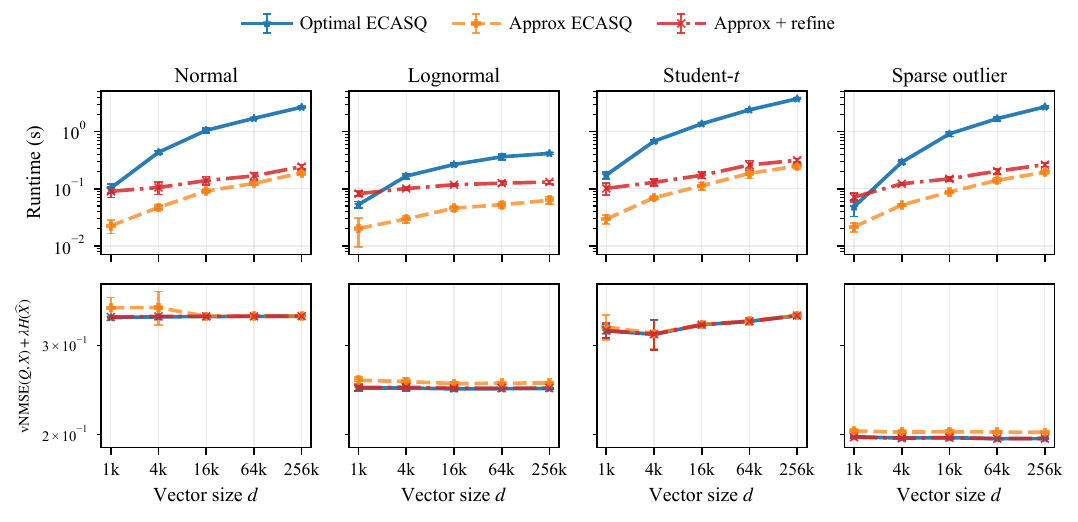}
    \caption{Runtime and objective scaling for BF16 inputs with at most $s=16$
    quantization values and fixed $\lambda=0.1$.  Points show the mean and two-sided 95\%
    Student-$t$ confidence interval over five seeds.  The lower subfigures report
    $\vnmse(Q,X)+\lambda H(\widehat X)$.  All runtime subfigures share one y-axis
    range, and all objective subfigures share another.}
    \label{fig:eval-runtime-s16}
\evalfigureend

\evalfigurestart
    \centering
    \includegraphics[width=0.98\textwidth]{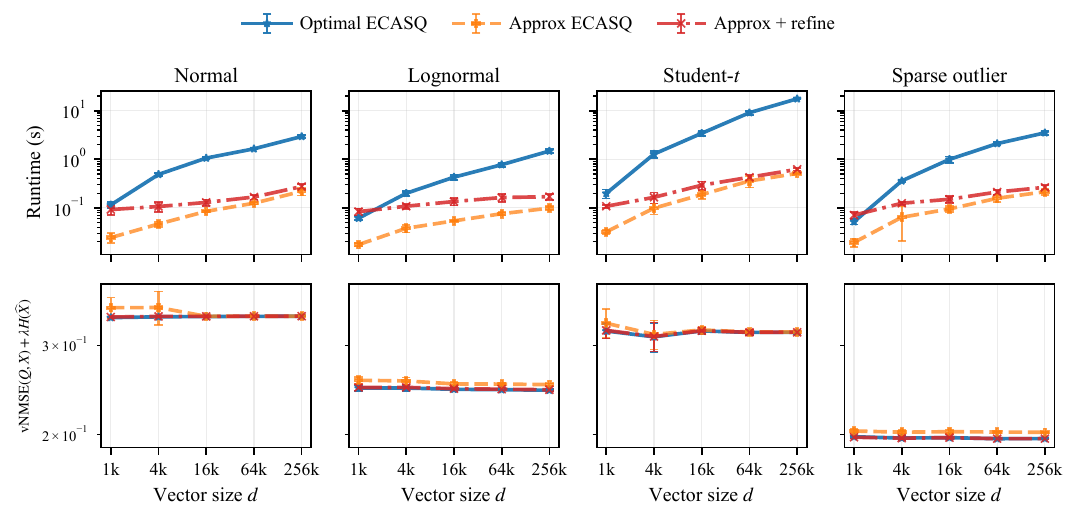}
    \caption{Runtime and objective scaling for BF16 inputs with at most $s=64$
    quantization values and fixed $\lambda=0.1$.  Points show the mean and two-sided 95\%
    Student-$t$ confidence interval over five seeds.  The lower subfigures report
    $\vnmse(Q,X)+\lambda H(\widehat X)$.  All runtime subfigures share one y-axis
    range, and all objective subfigures share another.}
    \label{fig:eval-runtime-s64}
\evalfigureend

\evalfigurestart
    \centering
    \includegraphics[width=0.98\textwidth]{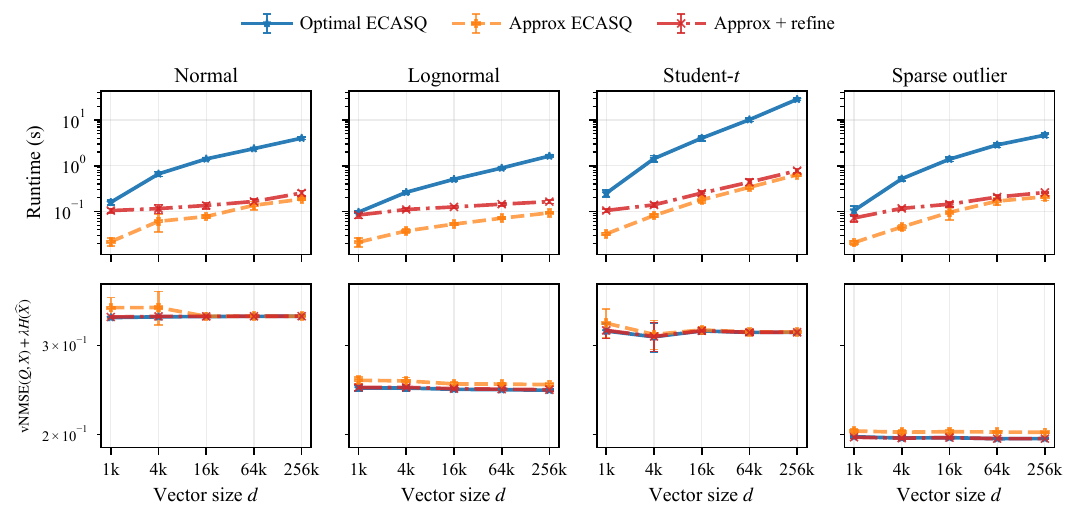}
    \caption{Runtime and objective scaling for BF16 inputs with at most $s=256$
    quantization values and fixed $\lambda=0.1$.  Points show the mean and two-sided 95\%
    Student-$t$ confidence interval over five seeds.  The lower subfigures report
    $\vnmse(Q,X)+\lambda H(\widehat X)$.  All runtime subfigures share one y-axis
    range, and all objective subfigures share another.}
    \label{fig:eval-runtime-s256}
\evalfigureend

\evalfigurestart
    \centering
    \includegraphics[width=0.98\textwidth]
      {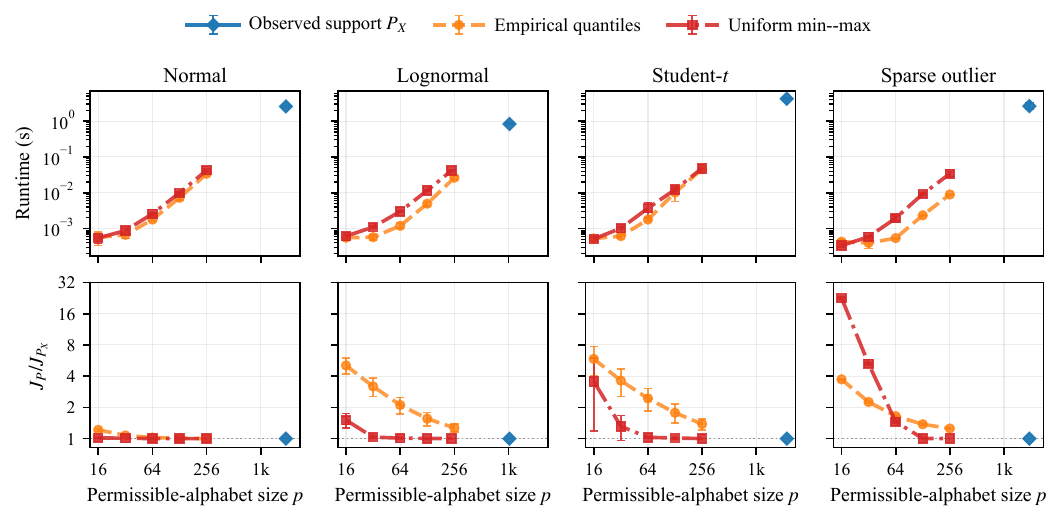}
    \caption{Runtime--quality tradeoff induced by the permissible alphabet
    $P$.  Each input is a BF16 vector $X\in\mathbb{R}^d$ with
    $d=16{,}384$; columns show Normal, Lognormal, Student-$t$, and
    sparse-outlier inputs.  We use at most $s=16$ quantization values,
    $\lambda=0.1$, requested candidate counts
    $p\in\{16,32,64,128,256\}$.  The upper row reports
    runtime, including histogram construction, candidate-set
    construction, and optimization, but excluding vector generation and
    metric serialization.  The lower row reports $J_P/J_{P_X}$, where
    $J_P=\vnmse(Q_P,X)+\lambda H(\widehat X)$.  Points and error bars show the
    mean and two-sided 95\% Student-$t$ confidence interval.  The horizontal
    axis uses the actual candidate count after duplicate removal.}
    \label{fig:eval-permissible-alphabet}
\evalfigureend

\evalfigurestart
    \centering
    \includegraphics[width=0.98\textwidth]
      {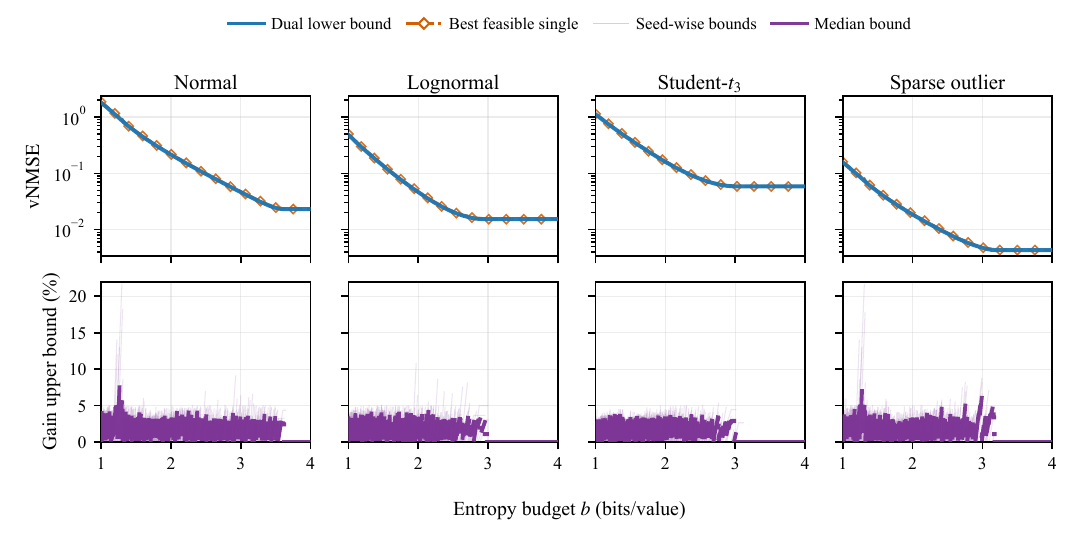}
    \caption{Entropy-constrained vNMSE and an upper bound on the
    benefit of two-way time sharing.  Columns show BF16 Normal, Lognormal,
    Student-$t$, and sparse-outlier vectors with $d=16{,}384$, $P=P_X$, at
    most $s=16$ quantization values, and entropy budgets
    $b\in[1,4]$ bits/value, sampled every $0.001$ bits and augmented with
    retained feasible-frontier breakpoints.  For each input, Optimal ECASQ
    solutions supported by $\vnmse(Q,X)+\lambda H(\widehat X)$ are adaptively
    refined from $\lambda=0$ through an endpoint attaining at most one
    bit/value.  Curves
    in the upper row are arithmetic means over the five seeds: solid lines show
    the dual lower bound $L(b)$, while dashed steps and open diamonds show the
    best retained feasible distortion $D_{\rm single,feasible}(b)$.  For each
    seed, these quantities give the relative-gain upper bound
    $U(b)=100[1-L(b)/D_{\rm single,feasible}(b)]$ shown in the lower row.  Thin
    lines are the five seeds and the thick line is their pointwise median.  All
    lower subfigures here and in
    \Cref{fig:eval-entropy-frontier,fig:eval-entropy-frontier-s256} use the
    same $0$--$22\%$ linear scale.
    Entropy is $H(\widehat X)$ and excludes reconstruction-value and
    entropy-model metadata, global framing, and finite-stream redundancy.}
    \label{fig:eval-entropy-frontier-s16}
\evalfigureend

\subsection{Effect of the Permissible Alphabet $P$}
\label{app:eval-permissible-alphabet}

The general-$P$ algorithm in \Cref{app:general-p} permits quantization values
that need not occur in the input.  We therefore compare the observed support
$
  P_X=\set{x_i:i\in[d]}
$
with smaller candidate alphabets.  For BF16 vectors, the alternatives are
weighted empirical quantiles and BF16-rounded uniform grids between the
observed minimum and maximum; both include the extrema.  We request
$p\in\set{16,32,64,128,256}$ candidates, although duplicate removal can make
the actual value of $\lvert P\rvert$ smaller.  We use the same four
distributions and five seeds as the synthetic study, with $d=16{,}384$, at
most $s=16$ quantization values, and fixed $\lambda=0.1$.  Every point is
computed with the optimal Hirschberg grid DP.  Runtime includes histogram and
candidate-set construction and optimization, but excludes vector generation
and metric serialization.

Uniform grids give the strongest overall tradeoff.  With 256 requested
candidates, their mean objective is within $0.25\%$ of the observed-support
result for every distribution while reducing runtime by $20$--$88\times$.
Empirical quantiles are competitive for Normal inputs, but at 256 candidates
their objective penalty is $26.9\%$, $38.5\%$, and $25.2\%$ on Lognormal,
Student-$t$, and sparse-outlier inputs, respectively.  Equal-mass quantiles
devote few candidates to tails and isolated extremes, whereas the uniform
min--max grid preserves range coverage.

\subsection{Entropy--vNMSE Frontiers and Time Sharing}
\label{app:eval-entropy-frontier}

We next measure how much two-way time sharing can improve the
entropy-constrained frontier for the four synthetic distributions.  Write
$D(Q)=\vnmse(Q,X)$ and $R(Q)=H(\widehat X_Q)$.  For a maximum codebook size
$s$ and retained set of multipliers $\Lambda$, define the best retained
feasible single quantizer and the dual lower bound
\[
\begin{aligned}
D_{\rm single,feasible}(b)
  &=\min_{\lambda\in\Lambda:\,R(Q_\lambda)\le b}D(Q_\lambda),\\
\phi(\lambda)
  &=\min_{Q\subseteq P_X,\,|Q|\le s}
      \{D(Q)+\lambda R(Q)\},\\
L(b)&=\max_{\lambda\in\Lambda}\{\phi(\lambda)-\lambda b\}.
\end{aligned}
\]
Every retained $\phi(\lambda)$ is computed by Optimal ECASQ, which globally
minimizes the Lagrangian objective.  For any deterministic quantizer or
time-share with rate at most $b$, its distortion is at least
$\phi(\lambda)-\lambda b$.  Consequently,
\[
L(b)\le D_{\rm TS}^\star(b)\le D_{\rm det}^\star(b)
     \le D_{\rm single,feasible}(b),
\]
where $D_{\rm TS}^\star$ and $D_{\rm det}^\star$ are the optimal time-shared
and deterministic frontiers.  The lower subfigures \mbox{therefore report the
upper bound}
\[
U(b)=100\left(1-\frac{L(b)}{D_{\rm single,feasible}(b)}\right)
\]
on the percentage improvement that time sharing can provide over the optimal
deterministic single quantizer.  We use output entropy here, rather than the
metadata-inclusive total rate, because this is the constraint convexified
by the time-sharing result in \Cref{sec:discussion}.

For numerical conservatism, before forming $L(b)$ we reduce every stored
Lagrangian objective $\phi$ by
$\max\{2\mathord\times10^{-7},2\mathord\times10^{-5}|\phi|\}$, the solver's
objective-validation tolerance.  This can only lower $L(b)$ and hence loosen
the reported upper bound.

We adaptively refine the supported frontier.  Starting from $\lambda=0$ and an
endpoint with entropy at most one bit/value, each adjacent discovered pair is
queried at the multiplier given by its supporting-line slope.  An interval is
closed when this query finds no new supported point, or when the relative
vNMSE change between its endpoints is at most $5\%$.  Above the entropy of the
$\lambda=0$ endpoint, the curves are extended constantly because that endpoint
already globally minimizes distortion.  On the $0.001$-bit budget grids, for
$s=16,64,256$, respectively, $U(b)$ has medians
$1.16\%,1.59\%,1.96\%$, $95$th percentiles $4.07\%,5.03\%,6.12\%$, and is at
most $5\%$ in $98.2\%,94.9\%,91.4\%$ of the
60k, 100k, and 140k distribution/seed/budget checks.  The maximum is
$21.7\%$ for each quantization-value budget and occurs on a narrow low-rate interval.
Thus, time sharing completes the theory, while the exact Lagrangian objectives
show that its possible improvement is small at most tested budgets.  This
upper bound relies on global optimality of the Lagrangian solves: the objective
of an arbitrary feasible or approximate solution is an upper bound on
$\phi(\lambda)$ and cannot be substituted into $L(b)$.

\evalfigurestart
    \centering
    \includegraphics[width=0.98\textwidth]
      {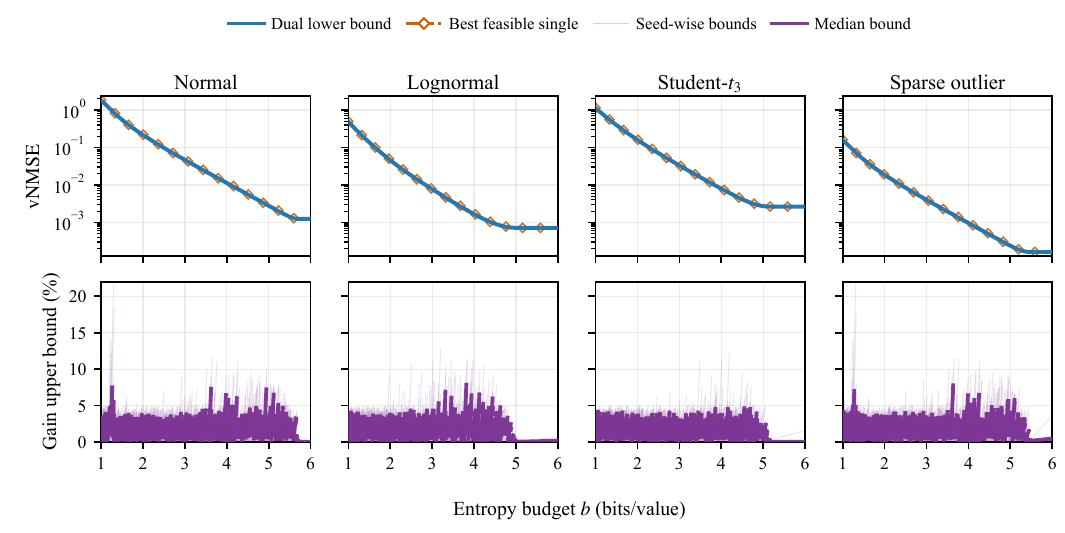}
    \caption{Entropy-constrained vNMSE and the time-sharing gain upper bound
    with at most $s=64$ quantization values.  The setup and curve encodings
    match \Cref{fig:eval-entropy-frontier-s16}: columns show the four BF16 synthetic
    distributions with $d=16{,}384$, $P=P_X$; budgets
    $b\in[1,6]$ bits/value are sampled every $0.001$ bits and augmented with
    retained feasible-frontier breakpoints.  The upper row shows the mean dual
    lower bound and best retained feasible distortion, and the lower row shows
    the five seed-wise gain upper bounds and their pointwise median.}
    \label{fig:eval-entropy-frontier}
\evalfigureend

\evalfigurestart
    \centering
    \includegraphics[width=0.98\textwidth]
      {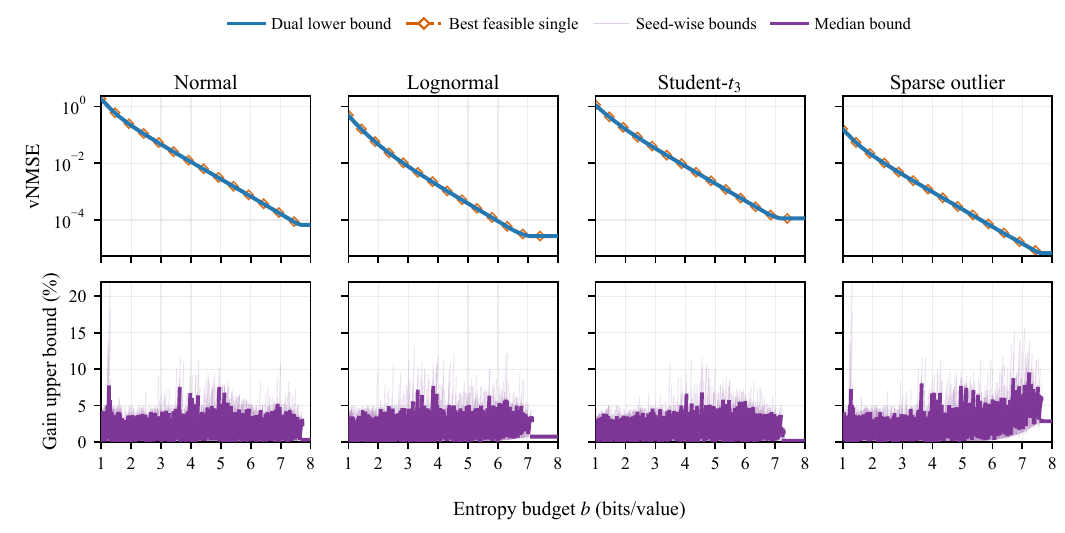}
    \caption{Entropy-constrained vNMSE and the time-sharing gain upper bound
    with at most $s=256$ quantization values.  The setup and curve encodings
    match \Cref{fig:eval-entropy-frontier-s16}: columns show the four BF16 synthetic
    distributions with $d=16{,}384$, $P=P_X$; budgets
    $b\in[1,8]$ bits/value are sampled every $0.001$ bits and augmented with
    retained feasible-frontier breakpoints.  The upper row shows the mean dual
    lower bound and best retained feasible distortion, and the lower row shows
    the five seed-wise gain upper bounds and their pointwise median.}
    \label{fig:eval-entropy-frontier-s256}
\evalfigureend
\FloatBarrier

\end{document}